\documentclass{article}
\usepackage{iclr2027_conference,times}

\usepackage{amsmath,amsfonts,bm}

\def\eqref#1{equation~\ref{#1}}
\def\1{\bm{1}}

\DeclareMathAlphabet{\mathsfit}{\encodingdefault}{\sfdefault}{m}{sl}
\SetMathAlphabet{\mathsfit}{bold}{\encodingdefault}{\sfdefault}{bx}{n}

\usepackage{hyperref}
\hypersetup{
    colorlinks=true,
    linkcolor=blue!70!black,
    citecolor=blue!70!black,
    urlcolor=blue!70!black
}
\usepackage{url}
\usepackage{amsmath,amssymb,amsfonts}
\usepackage{booktabs}
\usepackage{graphicx}
\usepackage{wrapfig}
\usepackage{subcaption}
\usepackage{multirow}
\usepackage{xcolor}
\usepackage{algorithm}
\usepackage{algorithmic}
\usepackage{enumitem}

\newsavebox{\adjbox}
\newenvironment{adjustbox}[1]{%
  \begin{lrbox}{\adjbox}%
}{%
  \end{lrbox}%
  \resizebox{\linewidth}{!}{\usebox{\adjbox}}%
}

\setlist{nosep,leftmargin=1.5em}

\newtheorem{theorem}{Theorem}

\newtheorem{proposition}[theorem]{Proposition}

\newtheorem{remark}{Remark}

\providecommand{\qed}{\hfill\ensuremath{\blacksquare}}
\newenvironment{proof}[1][Proof]{\noindent\textbf{#1.} }{\qed\par\smallskip}

\title{A Fisher-Geometric View of Multi-Teacher On-Policy Distillation}

\author{Kaiyuan Liu$^{1,4}$, Ziyuan Zhuang$^{4}$\thanks{Project leader.}, Haoxuan Ma$^{2}$, Yannian Gu$^{3}$, Mingfei Song$^{3}$, \\
Rongxiang Weng$^{4}$\thanks{Corresponding author.}, Jieping Ye$^{1}$ \\
\normalfont $^{1}$Zhejiang University \quad $^{2}$Nanjing University \quad $^{3}$Shanghai Jiao Tong University \\
\normalfont $^{4}$Meituan LongCat
}

\iclrfinalcopy

\begin{document}

\maketitle

\begin{abstract}
Consolidating specialized foundation models into a unified generalist is a central goal of post-training. Two approaches are in common use: model merging, which is fast but treated as a heuristic, and multi-teacher on-policy distillation, which matches distributions but demands heavy rollout compute and suffers from cross-task gradient conflict. We show that the two are one objective seen at different orders. Within a shared basin of homologous specialists, the multi-teacher on-policy distillation objective is, to second order, exactly the Fisher-merging objective. Its minimizer is therefore the Fisher merge, a classic model-merging method given by a closed-form combination of the specialists, so \emph{merging is optimal, to second order, as a closed-form stage of distillation}. Building on this identification, we propose \emph{Residual-Governed Branch-and-Merge} (RGBM), a two-phase consolidation procedure. Concretely, a full-parameter Fisher merge is intractable on foundation models, so Phase~1 restricts the merge to the task-vector subspace. Extending tangent-space analyses of merging, we show that the same coefficients that mix task vectors also mix implicit rewards, which lets a residual program on unlabelled prompts solve for them. The program minimizes the worst per-domain residual and reports its energy-normalized values as the \emph{domain residual spectrum}. Phase~2 lets this spectrum govern what may be added: RGBM passivates domains with low normalized residuals, trains isolated branches only for domains above the spectral gap, and writes back the admitted updates with calibrated shrinkage. On two open-source consolidation suites, the Phase~1 merge alone matches or outperforms fully trained multi-teacher distillation, and RGBM improves on it further with zero extra joint-training steps, whereas every un-gated write-back, including continued joint distillation from the merge, falls below it.
\end{abstract}

\section{Introduction}
\label{sec:intro}

Post-training of large language models (LLMs) increasingly produces homologous domain specialists---fine-tuned from a common base via reinforcement learning (RL) on distinct reasoning modalities~\citep{agarwal2024onpolicy,shao2024deepseekmath,deepseek2025r1,moonshot2025kimi,openmopd2026,liu2025efficient}. Serving an ensemble of specialists multiplies memory and latency, making consolidation into a single generalist essential.

Two paradigms are in common use, each with its own shortcomings. \emph{Weight-space merging}~\citep{ilharco2023editing,yadav2023ties,yu2024dare,lu2025twin,tang2024merging} adds to the base model a weighted sum of the specialists' weight updates (their \emph{task vectors}) at zero training cost, but the weights are set by hand and, once the merge is formed, nothing tells which domain it failed to recover or whether the chosen weights still combine the specialists faithfully. \emph{Multi-teacher on-policy distillation}~\citep{agarwal2024onpolicy,openmopd2026,d3mopd2026,gopd2026,li2025policyfiltration} (MOPD) instead trains a student to match each specialist (its \emph{teacher}) on the student's own generations, which repairs what merging misses, but at the cost of rollouts on every domain and with no control over which domains its shared-weight gradients overwrite. Neither paradigm can answer the question the other one raises: what a merge has already recovered, and where distillation is still needed without harming what is already correct. Answering it requires relating the two, which is what this paper does.

\begin{figure}[t]
\centering
\includegraphics[width=0.84\textwidth]{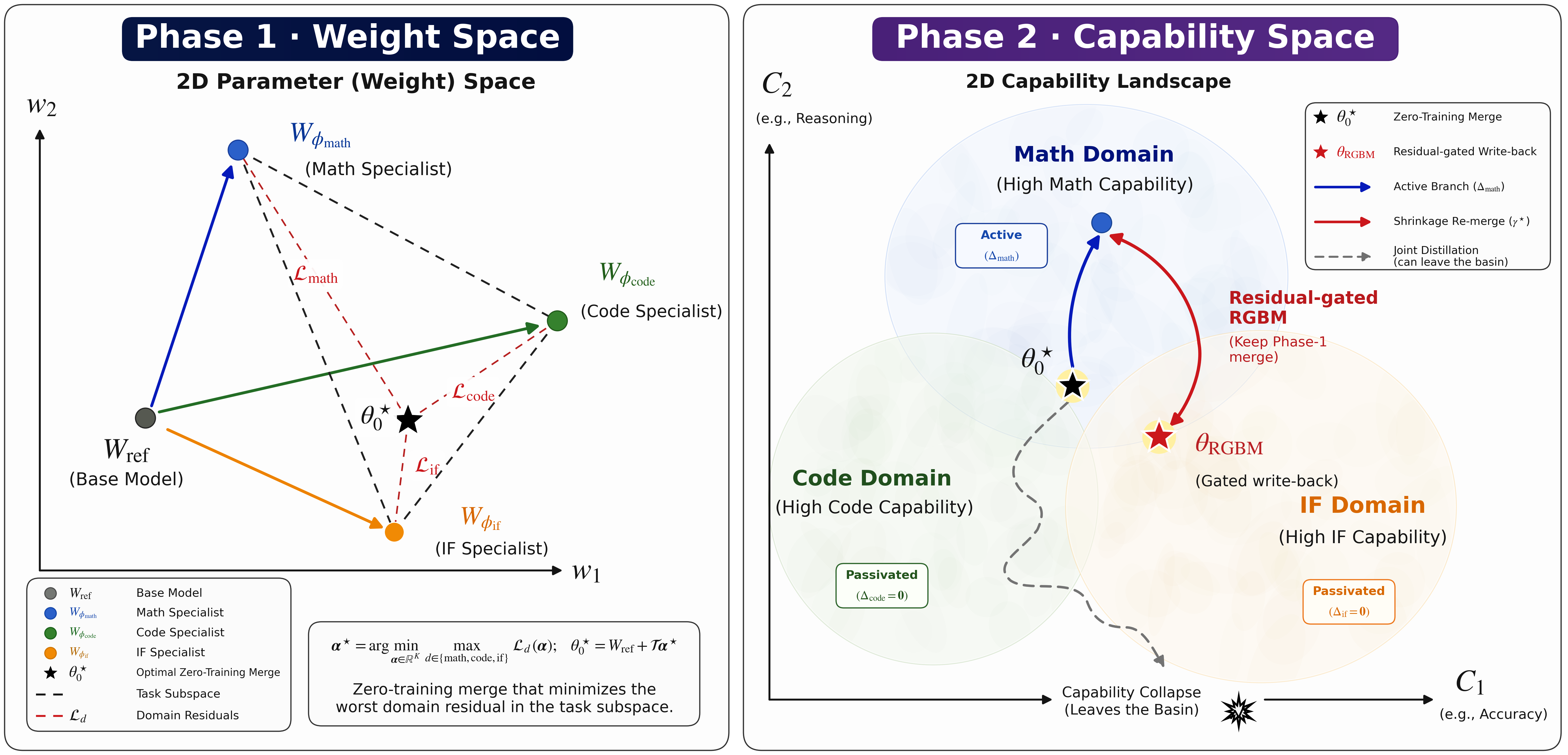}
\caption{\textbf{Merge before you distill (schematic).} (a)~Weight space: the dashed triangle is the task subspace spanned by the Math, Code, and IF task vectors. A Minimax projection on unlabelled prompts yields the zero-training merge $\theta_0^\star=W_{\mathrm{ref}}+\mathcal{T}\boldsymbol{\alpha}^\star$, where $\mathcal{T}$ stacks those task vectors, together with the per-domain residuals $\mathcal{L}_d$ (red segments). Normalizing these residuals by domain reward energy gives the domain residual spectrum. (b)~Capability space: joint distillation from $\theta_0^\star$ can leave the basin (dashed path). The normalized spectrum passivates Code and IF; RGBM writes back only the admitted Math branch, shrunk by $\gamma^\star$ (the largest factor that still passes the linearity test). See Section~\ref{sec:method} for details.}
\vspace*{-2\baselineskip}
\label{fig:teaser}
\end{figure}

Our starting point is that these paradigms optimize the same objective at different orders of approximation (Section~\ref{sec:theory}). Expanded to second order around the specialists in a shared basin, the distillation objective is exactly the Fisher-merging objective of \citet{matena2022merging}, whose minimizer is a weighted average of the specialists with Fisher-information weights; merging is therefore not an alternative to distillation but is optimal, to second order, as its closed-form stage. A full-parameter Fisher merge is intractable on foundation models, so we restrict the merge to the task-vector subspace (Phase~1, described next). Extending tangent-space analyses of merging, we show that, to first order, the same coefficients that mix the task vectors also mix the specialists' \emph{implicit rewards} (the log-probability ratios between each specialist and the base), which lets us obtain them by minimizing the worst reward-space residual on unlabelled prompts. After solving, reward-energy normalization measures the relative share left unrecovered in each domain. The fidelity of this first-order fit provides an intrinsic, label-free test of whether a checkpoint still lies where the expansion holds.

Grounded in this framework, we propose \emph{Residual-Governed Branch-and-Merge} (RGBM) as shown in Figure~\ref{fig:teaser}. In Phase~1, RGBM determines the combination weights by minimizing the worst per-domain residual via a small convex program on unlabelled prompts, yielding the zero-training merged model. It then normalizes the residuals by their domain reward energies to form the \emph{domain residual spectrum}. This spectrum directly dictates what, if anything, should follow the merge. Standard joint distillation tends to degrade the merged model because gradients from domains with low normalized residuals corrupt shared weights. To prevent this interference, Phase~2 leaves those domains untouched and trains isolated branches only for domains above a closed-form spectral gap. Each admitted branch update is then written back, scaled by the largest factor that passes the linearity test.

Empirically, we validate our framework across two multi-teacher consolidation benchmarks: Open-MOPD~\citep{openmopd2026} (SmolLM3-3B with Math, Code, and Instruction Following specialists) and G-OPD~\citep{gopd2026} (Qwen3-4B with Math and Code specialists). On Open-MOPD, the zero-training Phase~1 merge alone achieves a Macro average of $32.52$, outperforming the official $600$-step distilled student ($30.93$) without requiring any rollout compute. By passivating domains with low normalized residuals and training an isolated branch solely for Math, Phase~2 further elevates the Macro score to $33.03$ with zero joint-training steps. Crucially, un-gated baselines such as TIES~\citep{yadav2023ties}, DARE~\citep{yu2024dare}, and continued joint distillation all fall below the merge, confirming that their updates can corrupt shared parameters. The same procedure transfers seamlessly to G-OPD, where RGBM surpasses the strongest trained baseline, demonstrating the generality of residual-governed consolidation.

\paragraph{Contributions.}
(1)~\textbf{Theoretical Bridge:} we establish a connection between multi-teacher on-policy distillation and model merging, showing that the MOPD objective coincides to second order with Fisher merging \citep{matena2022merging}, and derive an implicit-reward homomorphism that yields a label-free linear basin diagnostic.
(2)~\textbf{Two-Phase RGBM Framework:} we propose \emph{Residual-Governed Branch-and-Merge} (RGBM). Phase~1 computes the zero-training merge from a residual program on unlabelled prompts and forms an energy-normalized spectrum for cross-domain comparison. Phase~2 passivates domains below the spectral gap, trains isolated branches exclusively for domains above it, and writes admitted updates back via calibrated shrinkage.
(3)~\textbf{Empirical Validation and Protection Mechanism:} across two model families (SmolLM3-3B and Qwen3-4B), Phase~1 alone matches or beats fully trained distillation, and Phase~2 advances performance further with zero joint-training steps. Crucially, un-gated write-backs fall below the merge, confirming that RGBM succeeds by shielding domains below the normalized spectral gap from cross-task interference.

\vspace*{-0.5\baselineskip}
\section{Merging Is Optimal, to Second Order, as a Closed-Form Stage of Distillation}
\label{sec:theory}
\vspace*{-0.5\baselineskip}

This section establishes the identification on which the paper rests: to second order, distillation and Fisher merging optimize the same objective (Section~\ref{sec:theory_mopd_fisher}); on the task-vector subspace that objective has a closed-form minimizer (Section~\ref{sec:theory_subspace_ta}); and, to first order in the tangent space, combining parameters adds implicit rewards, which yields a validity test (Section~\ref{sec:theory_homomorphism}). Proofs are in Appendix~\ref{app:proofs}.

\vspace*{-0.5\baselineskip}
\subsection{Setting}
\label{sec:theory_setting}
\vspace*{-0.5\baselineskip}

Let $W_{\mathrm{ref}} \in \mathbb{R}^P$ denote the base weights ($P$ parameters) and $\pi_\theta(\cdot \mid x)$ the autoregressive policy of a model with weights $\theta \in \mathbb{R}^P$ on prompt $x$, with $y$ a completion. We consider $K$ homologous specialists $\{W_{\phi_1}, \dots, W_{\phi_K}\}$, fine-tuned from the same $W_{\mathrm{ref}}$ on datasets $\mathcal{D}_d$ via RL on each domain, with policies $\pi_{W_{\phi_d}}$. The task vector of specialist $d$ is $\tau_d := W_{\phi_d} - W_{\mathrm{ref}}$, forming $\mathcal{T} := [\tau_1, \dots, \tau_K] \in \mathbb{R}^{P \times K}$.
Multi-teacher on-policy distillation (MOPD) trains a student $\theta$ by minimizing the domain-weighted reverse-KL to the specialist policies, with mixture weights $p_d$:
\begin{equation}
J_{\mathrm{MOPD}}(\theta) := \sum_{d=1}^K p_d \, \mathbb{E}_{x \sim \mathcal{D}_d} \left[ \mathrm{KL}\big( \pi_\theta(\cdot \mid x) \,\|\, \pi_{W_{\phi_d}}(\cdot \mid x) \big) \right], \quad p_d \ge 0, \quad \sum_{d=1}^K p_d = 1.
\label{eq:mopd_obj}
\end{equation}

\vspace*{-0.5\baselineskip}
\subsection{MOPD Is Fisher Merging to Second Order}
\label{sec:theory_mopd_fisher}
\vspace*{-0.5\baselineskip}

\begin{theorem}[Second-Order Equivalence of MOPD to Fisher Merging]
\label{thm:mopd_fisher}
Assume the specialists share a basin $\mathcal{B} \subset \mathbb{R}^P$ (a connected neighborhood of $W_{\mathrm{ref}}$ containing every $W_{\phi_d}$) on which each mapping $\theta \mapsto \mathrm{KL}(\pi_\theta \| \pi_{W_{\phi_d}})$ is three times differentiable with bounded third derivative. For $\theta \in \mathcal{B}$, the second-order Taylor expansion of $J_{\mathrm{MOPD}}(\theta)$ around the specialist checkpoints $\{W_{\phi_d}\}_{d=1}^K$ is:
\begin{equation}
J_{\mathrm{MOPD}}(\theta) = \frac{1}{2} \sum_{d=1}^K p_d \, (\theta - W_{\phi_d})^\top F_d (\theta - W_{\phi_d}) + \mathcal{O}\left( \max_d \|\theta - W_{\phi_d}\|^3 \right),
\label{eq:mopd_taylor}
\end{equation}
where $F_d := \mathbb{E}_{x \sim \mathcal{D}_d, y \sim \pi_{W_{\phi_d}}} [ \nabla_\theta \log \pi_{W_{\phi_d}}(y \mid x) \, \nabla_\theta \log \pi_{W_{\phi_d}}(y \mid x)^\top ] \in \mathbb{R}^{P \times P}$ is the empirical Fisher matrix of specialist $d$. The unconstrained global minimizer of Eq.~\ref{eq:mopd_taylor} is the Fisher merge:
\begin{equation}
\theta_{\mathrm{Fisher}}^\star = \left( \sum_{d=1}^K p_d F_d \right)^{-1} \sum_{d=1}^K p_d F_d W_{\phi_d}.
\label{eq:fisher_merge_global}
\end{equation}
\end{theorem}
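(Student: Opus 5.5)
The plan is to expand each domain term of $J_{\mathrm{MOPD}}$ separately around its own specialist $W_{\phi_d}$, then sum with weights $p_d$. Set $g_d(\theta) := \mathbb{E}_{x\sim\mathcal{D}_d}[\mathrm{KL}(\pi_\theta(\cdot\mid x)\,\|\,\pi_{W_{\phi_d}}(\cdot\mid x))]$. The basin $\mathcal{B}$ is assumed to contain the segment from $W_{\phi_d}$ to $\theta$; if it is merely connected, we work on a convex sub-neighborhood. With the third-derivative bound, Taylor's theorem with Lagrange remainder gives
\begin{equation*}
g_d(\theta) = g_d(W_{\phi_d}) + \nabla g_d(W_{\phi_d})^\top(\theta-W_{\phi_d}) + \tfrac12(\theta-W_{\phi_d})^\top \nabla^2 g_d(W_{\phi_d})(\theta-W_{\phi_d}) + \mathcal{O}(\|\theta-W_{\phi_d}\|^3).
\end{equation*}
The argument then reduces to three identities at $\theta=W_{\phi_d}$: the zeroth-order term vanishes, the gradient vanishes, and the Hessian equals $F_d$.

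The zeroth-order term is $\mathrm{KL}(\pi\|\pi)=0$. For the gradient, write $\mathrm{KL}(\pi_\theta\|\pi_{\phi_d})=\sum_y \pi_\theta(y)[\log\pi_\theta(y)-\log\pi_{\phi_d}(y)]$. Differentiating gives
\begin{equation*}
\sum_y \nabla\pi_\theta(y)\,[\log\pi_\theta(y)-\log\pi_{\phi_d}(y)] + \sum_y \nabla\pi_\theta(y).
\end{equation*}
At $\theta=W_{\phi_d}$ the bracket is zero, and the second sum is $\nabla\sum_y\pi_\theta(y)=0$. For the Hessian, differentiate once more and evaluate at the specialist. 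The terms carrying the bracket vanish, and so does $\sum_y\nabla^2\pi_\theta=0$. What remains is
\begin{equation*}
\sum_y \nabla\pi_\theta(y)\,\nabla\log\pi_\theta(y)^\top = \mathbb{E}_{y\sim\pi_{\phi_d}}\big[\nabla\log\pi\,\nabla\log\pi^\top\big],
\end{equation*}
which is the standard fact that the KL Hessian at coincidence is the Fisher information. Taking $\mathbb{E}_{x\sim\mathcal{D}_d}$ gives exactly $F_d$ as defined in the statement. Here $F_d$ is the model Fisher, with $y$ drawn from the specialist itself, which is what the statement's formula specifies. Summing over $d$ with $\sum_d p_d=1$ and bounding each remainder by $\max_d\|\theta-W_{\phi_d}\|^3$ yields Eq.~\ref{eq:mopd_taylor}.

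The minimizer follows from the quadratic $Q(\theta)=\tfrac12\sum_d p_d(\theta-W_{\phi_d})^\top F_d(\theta-W_{\phi_d})$. Each $F_d$ is positive semidefinite as an expected outer product, so $Q$ is convex. Its gradient is $\sum_d p_dF_d(\theta-W_{\phi_d})$, and setting it to zero gives $(\sum_d p_dF_d)\theta=\sum_d p_dF_dW_{\phi_d}$. We must additionally assume $\sum_d p_dF_d$ is invertible, which the statement implicitly does by writing the inverse. Under that assumption $Q$ is strictly convex, and Eq.~\ref{eq:fisher_merge_global} is its unique global minimizer. Without invertibility we would take the pseudo-inverse and note non-uniqueness.

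The main obstacle is the remainder bookkeeping rather than the algebra. Since $\theta$ must be simultaneously close to all $W_{\phi_d}$, the $\mathcal{O}(\cdot)$ is uniform only if the third-derivative bound holds on the convex hull of the $W_{\phi_d}$ together with $\theta$. Two further justifications are needed. First, differentiating under $\mathbb{E}_x$ and under the sum over completions $y$, which is finite for bounded length, requires dominated convergence over $x$; I would state this explicitly. Second, the minimizer of the truncated quadratic approximates a true minimizer of $J_{\mathrm{MOPD}}$ only when the specialists are mutually close. I would flag this as the sense of ``to second order'' rather than claim it exactly.
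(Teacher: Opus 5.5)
Your proposal is correct and follows the paper's proof essentially step for step. Both expand each domain divergence around its own specialist and show that its value and gradient vanish there (via $\sum_y \nabla\pi_\theta = \mathbf{0}$ and $\sum_y \nabla^2\pi_\theta = \mathbf{0}$), identify the Hessian as $F_d$ through $\nabla\pi_\theta = \pi_\theta \nabla\log\pi_\theta$, sum with weights $p_d$, and solve the first-order condition assuming $\sum_d p_d F_d$ is invertible. Your extra remarks tighten points the paper leaves implicit without changing the route: positive semidefiniteness of $F_d$ certifies that the stationary point is the global minimizer, the Lagrange remainder needs the segments to lie in $\mathcal{B}$ rather than $\mathcal{B}$ merely being connected, and dominated convergence justifies differentiating under $\mathbb{E}_x$.
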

The proof (Appendix~\ref{app:proof_thm1}) relies only on the fact that at $\theta = W_{\phi_d}$, the KL divergence and its gradient vanish while its Hessian equals $F_d$. Equation~\ref{eq:fisher_merge_global} recovers the Fisher-weighted average of \citet{matena2022merging}, originally derived via a Laplace approximation to the posterior. Theorem~\ref{thm:mopd_fisher} establishes that this exact point is also the second-order optimum of the multi-teacher reverse-KL objective. The two paradigms therefore differ in approximation order rather than optimization goal: while $J_{\mathrm{MOPD}}$ aligns student rollouts to teachers in distribution space, it corresponds in parameter space to the aggregate Fisher distance $\frac{1}{2}\sum_d p_d \|\theta - W_{\phi_d}\|_{F_d}^2$ with $\|v\|_{F_d}^2 := v^\top F_d v$. Merging is thus optimal, to second order, as a closed-form stage of distillation: gradient steps taken from the Fisher merge act only on the third-order remainder, so merging should come first. What a linear combination in the task-vector subspace leaves unrecovered is a per-domain error, which the residual program of Section~\ref{sec:method_phase1} measures.

\vspace*{-0.5\baselineskip}
\subsection{Subspace Fisher Projection and Task Arithmetic}
\label{sec:theory_subspace_ta}
\vspace*{-0.5\baselineskip}

Eq.~\ref{eq:fisher_merge_global} requires inverting a $P \times P$ Fisher matrix, which is intractable for modern LLMs. We therefore restrict the student to the $K$-dimensional task subspace $\theta(\boldsymbol{\alpha}) = W_{\mathrm{ref}} + \mathcal{T} \boldsymbol{\alpha}$, with one coefficient $\alpha_d$ per specialist. Let $\mathbf{e}_d$ be the $d$-th standard basis vector, so $\theta(\mathbf{e}_d)=W_{\phi_d}$ and $\theta(\boldsymbol{\alpha})-W_{\phi_d}=\mathcal{T}(\boldsymbol{\alpha}-\mathbf{e}_d)$. That substitution compresses Eq.~\ref{eq:mopd_taylor} into a $K \times K$ system:
\begin{equation}
J_{\mathrm{sub}}(\boldsymbol{\alpha}) := \frac{1}{2} \sum_{d=1}^K p_d \, (\boldsymbol{\alpha} - \mathbf{e}_d)^\top S_d (\boldsymbol{\alpha} - \mathbf{e}_d), \quad S_d := \mathcal{T}^\top F_d \mathcal{T} \in \mathbb{R}^{K \times K},
\label{eq:subspace_fisher_obj}
\end{equation}
where $S_d$ is the projected Fisher of specialist $d$. Setting $\nabla_{\boldsymbol{\alpha}} J_{\mathrm{sub}} = \mathbf{0}$ gives the closed-form subspace solution:
\begin{equation}
\boldsymbol{\alpha}_{\mathrm{sub}}^\star = \left( \sum_{d=1}^K p_d S_d \right)^{-1} \sum_{d=1}^K p_d S_d \mathbf{e}_d.
\label{eq:subspace_alpha}
\end{equation}
\begin{remark}
\label{rem:task_arith}
Task Arithmetic~\citep{ilharco2023editing} forms $W_{\mathrm{ref}} + \mathcal{T}\boldsymbol{\alpha}$ with hand-set coefficients, typically the isotropic $\boldsymbol{\alpha}=\alpha\mathbf{1}$ with $\alpha=1$. Eq.~\ref{eq:subspace_alpha} identifies the coefficients that make this family the exact metric projection of the quadratic Fisher objective onto the task subspace ($\mathcal{T}^\top \nabla_\theta J_{\mathrm{MOPD}}(\theta(\boldsymbol{\alpha}_{\mathrm{sub}}^\star)) = \mathbf{0}$); with these coefficients it is the computable closed-form instance of Fisher merging on LLMs.
\end{remark}

\vspace*{-0.5\baselineskip}
\subsection{First-Order Homomorphism and a Validity Diagnostic}
\label{sec:theory_homomorphism}
\vspace*{-0.5\baselineskip}

The subspace solution still depends on $F_d$. Forming $S_d=\mathcal{T}^\top F_d\mathcal{T}$ from parameters is impractical on LLMs, so the computations of Section~\ref{sec:method} run in the space of implicit rewards. The token-level implicit reward of specialist $d$ relative to $W_{\mathrm{ref}}$ is $r_d(x, y) := \log (\pi_{W_{\phi_d}}(y \mid x) / \pi_{W_{\mathrm{ref}}}(y \mid x))$, the log-probability ratio that RL fine-tuning implicitly optimizes~\citep{rafailov2023direct,bai2026multi}; write $\mathbf{r}(x, y) := [r_1(x,y),\dots,r_K(x,y)]^\top \in \mathbb{R}^K$ for the stacked reward. Extending tangent-space analyses of merging~\citep{ortizjimenez2023tangent}, a first-order expansion around $W_{\mathrm{ref}}$ yields the following homomorphism.

\begin{proposition}[First-Order Homomorphism]
\label{prop:homomorphism}
Let $f(x;\theta)$ be the logit vector at prompt-prefix $x$ and $Z(x;\theta)$ its softmax partition function. A first-order expansion around $W_{\mathrm{ref}}$ gives $r_d(x, y) \approx \nabla_\theta f(x; W_{\mathrm{ref}})[y]^\top \tau_d - \Delta \log Z_d(x)$, where $\Delta\log Z_d(x):=\log Z(x;W_{\phi_d})-\log Z(x;W_{\mathrm{ref}})$. For any linear combination $\theta(\boldsymbol{\alpha}) = W_{\mathrm{ref}} + \mathcal{T} \boldsymbol{\alpha}$:
\begin{equation}
\log \frac{\pi_{\theta(\boldsymbol{\alpha})}(y \mid x)}{\pi_{W_{\mathrm{ref}}}(y \mid x)} \approx \sum_{d=1}^K \alpha_d r_d(x, y) = \boldsymbol{\alpha}^\top \mathbf{r}(x, y).
\label{eq:homomorphism_formula}
\end{equation}
\end{proposition}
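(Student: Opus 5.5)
The plan is to write the token log-probability in logit--partition form and linearize each piece at $W_{\mathrm{ref}}$. For a single next-token prediction at prefix $x$ we have $\log \pi_\theta(y\mid x) = f(x;\theta)[y] - \log Z(x;\theta)$. This gives, for any $\theta$,
\begin{equation*}
\log \frac{\pi_\theta(y\mid x)}{\pi_{W_{\mathrm{ref}}}(y\mid x)} = \big(f(x;\theta)[y]-f(x;W_{\mathrm{ref}})[y]\big) - \big(\log Z(x;\theta)-\log Z(x;W_{\mathrm{ref}})\big).
\end{equation*}
For a multi-token completion the statement is read additively: sum this identity over positions, with $x$ standing for the running prefix. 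The argument is then identical at each token, so it suffices to treat one token.

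First, apply the identity at $\theta=W_{\phi_d}$. Then Taylor-expand the logit difference to first order: $f(x;W_{\phi_d})[y]-f(x;W_{\mathrm{ref}})[y] = \nabla_\theta f(x;W_{\mathrm{ref}})[y]^\top \tau_d + \mathcal{O}(\|\tau_d\|^2)$. Keeping the partition term exact as $\Delta\log Z_d(x)$ gives the first claimed expression for $r_d$.

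Second, apply the same expansion at $\theta(\boldsymbol{\alpha})$, where the displacement is $\mathcal{T}\boldsymbol{\alpha}=\sum_d\alpha_d\tau_d$. Linearity of the Jacobian map gives $\nabla_\theta f[y]^\top\mathcal{T}\boldsymbol{\alpha}=\sum_d\alpha_d\,\nabla_\theta f[y]^\top\tau_d$, up to $\mathcal{O}(\|\mathcal{T}\boldsymbol{\alpha}\|^2)$. The log-partition must also be linearized. Using $\nabla_\theta \log Z(x;\theta) = \mathbb{E}_{v\sim\pi_\theta}[\nabla_\theta f(x;\theta)[v]]$, we get
\begin{equation*}
\log Z(x;\theta(\boldsymbol{\alpha}))-\log Z(x;W_{\mathrm{ref}}) = \sum_d \alpha_d\, g(x)^\top\tau_d + \mathcal{O}(\|\mathcal{T}\boldsymbol{\alpha}\|^2),
\end{equation*}
where $g(x):=\mathbb{E}_{v\sim\pi_{W_{\mathrm{ref}}}}[\nabla_\theta f(x;W_{\mathrm{ref}})[v]]$. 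The same expansion applied to each specialist gives $\Delta\log Z_d(x)=g(x)^\top\tau_d+\mathcal{O}(\|\tau_d\|^2)$. Substituting, the log-partition shift at $\theta(\boldsymbol{\alpha})$ equals $\sum_d\alpha_d\Delta\log Z_d(x)$ to first order. Combining with the logit term yields $\sum_d\alpha_d r_d(x,y)$, which is Eq.~\ref{eq:homomorphism_formula}.

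The main obstacle is making the ``$\approx$'' honest, i.e.\ controlling the remainder. I would assume, as in Theorem~\ref{thm:mopd_fisher}, that $f$ is twice differentiable with bounded Hessian on the basin $\mathcal{B}$, so that $\log Z$ is as well. The total error is then $\mathcal{O}(\|\mathcal{T}\boldsymbol{\alpha}\|^2+\sum_d|\alpha_d|\|\tau_d\|^2)$, which is small exactly in the tangent (linearized) regime of \citet{ortizjimenez2023tangent}. This bound also explains why the deviation from additivity serves as a linearity diagnostic.

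A subtler point is the multi-token case, where the prefixes themselves depend on $y$ but not on $\theta$. The expansion is therefore per position with a fixed conditioning context, and the errors simply add over the completion length.
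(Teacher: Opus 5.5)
Your proposal is correct and follows essentially the same route as the paper's proof. Both use the decomposition $\log\pi_\theta(y\mid x)=f(x;\theta)[y]-\log Z(x;\theta)$, expand the logit to first order while keeping $\Delta\log Z_d(x)$ exact to obtain $r_d$, expand $\log Z$ separately to first order to get $\log Z(x;\theta(\boldsymbol{\alpha}))-\log Z(x;W_{\mathrm{ref}})\approx\sum_d\alpha_d\,\Delta\log Z_d(x)$, and arrive at the same remainder $\mathcal{O}\big(\|\mathcal{T}\boldsymbol{\alpha}\|^2+\sum_d|\alpha_d|\,\|\tau_d\|^2\big)$. Your $g(x)$ is exactly $\nabla_\theta\log Z(x;W_{\mathrm{ref}})$, and your per-position treatment of multi-token completions and explicit smoothness assumption are harmless elaborations that the paper leaves implicit.
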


Proposition~\ref{prop:homomorphism} provides two operational capabilities. First, the projected score $\mathcal{T}^\top\nabla_\theta\log\pi_{W_{\phi_d}}$ equals $\mathbf{r}$ up to a mean shift (Appendix~\ref{app:proof_residuals}), so the projected Fisher is an empirical reward covariance, $S_d \approx \mathrm{Cov}_{\pi_{W_{\phi_d}}}(\mathbf{r})$, and the per-domain residual in Section~\ref{sec:method_phase1} can be computed from forward passes alone. Second, the log-ratio of the merged model, $\ell_{\boldsymbol{\alpha}}(x,y) := \log \frac{\pi_{\theta(\boldsymbol{\alpha})}(y \mid x)}{\pi_{W_{\mathrm{ref}}}(y \mid x)}$, should scale linearly with the superposed reward $\boldsymbol{\alpha}^\top \mathbf{r}(x, y)$. The unexplained variance when regressing $\ell_{\boldsymbol{\alpha}}$ on $\boldsymbol{\alpha}^\top \mathbf{r}$ over sampled tokens is then a label-free test of whether the first-order superposition of Proposition~\ref{prop:homomorphism} still holds at that checkpoint:
\begin{equation}
\mathrm{Err}_{\mathrm{lin}}(\theta(\boldsymbol{\alpha})) := 1 - R^2\big(\ell_{\boldsymbol{\alpha}} \sim \boldsymbol{\alpha}^\top \mathbf{r}\big) = \min_{a,b} \frac{\mathbb{E}\big[ ( \ell_{\boldsymbol{\alpha}} - a\,\boldsymbol{\alpha}^\top \mathbf{r} - b )^2 \big]}{\mathrm{Var}[\ell_{\boldsymbol{\alpha}}]}.
\label{eq:err_lin_def}
\end{equation}
We treat $\mathrm{Err}_{\mathrm{lin}}(\theta(\boldsymbol{\alpha})) \le \delta_{\mathrm{tol}}$ under a fixed threshold $\delta_{\mathrm{tol}}$ as the operational criterion for a valid linear basin (we use \emph{linear basin} for this regime throughout). Large $\mathrm{Err}_{\mathrm{lin}}$ means the first-order fit has broken; Section~\ref{sec:method_phase2} uses the same test to cap how far a Phase~2 write-back may move.

\vspace*{-0.5\baselineskip}
\section{RGBM: Residual-Governed Branch-and-Merge}
\label{sec:method}
\vspace*{-0.5\baselineskip}

Section~\ref{sec:theory} establishes that parameter merging should precede distillation, and that whatever the linear combination leaves unrecovered is what a later stage must address. \textbf{Residual-Governed Branch-and-Merge} (\textsc{RGBM}; Figure~\ref{fig:rgbm_pipeline}; full procedure in Algorithm~\ref{alg:rgbm}, Appendix~\ref{app:algorithm_systems}) operationalizes this insight into two distinct phases. Phase~1 (Section~\ref{sec:method_phase1}) solves a convex subspace merge and then normalizes its per-domain remainders into the \emph{domain residual spectrum}. After analyzing why continuing joint distillation degrades domains with low normalized residuals (Section~\ref{sec:method_whynot}), Phase~2 (Section~\ref{sec:method_phase2}) trains isolated branches exclusively for domains marked by the spectrum as unrecovered, writing them back via calibrated shrinkage.

\begin{figure}[t]
\centering
\includegraphics[width=0.78\textwidth]{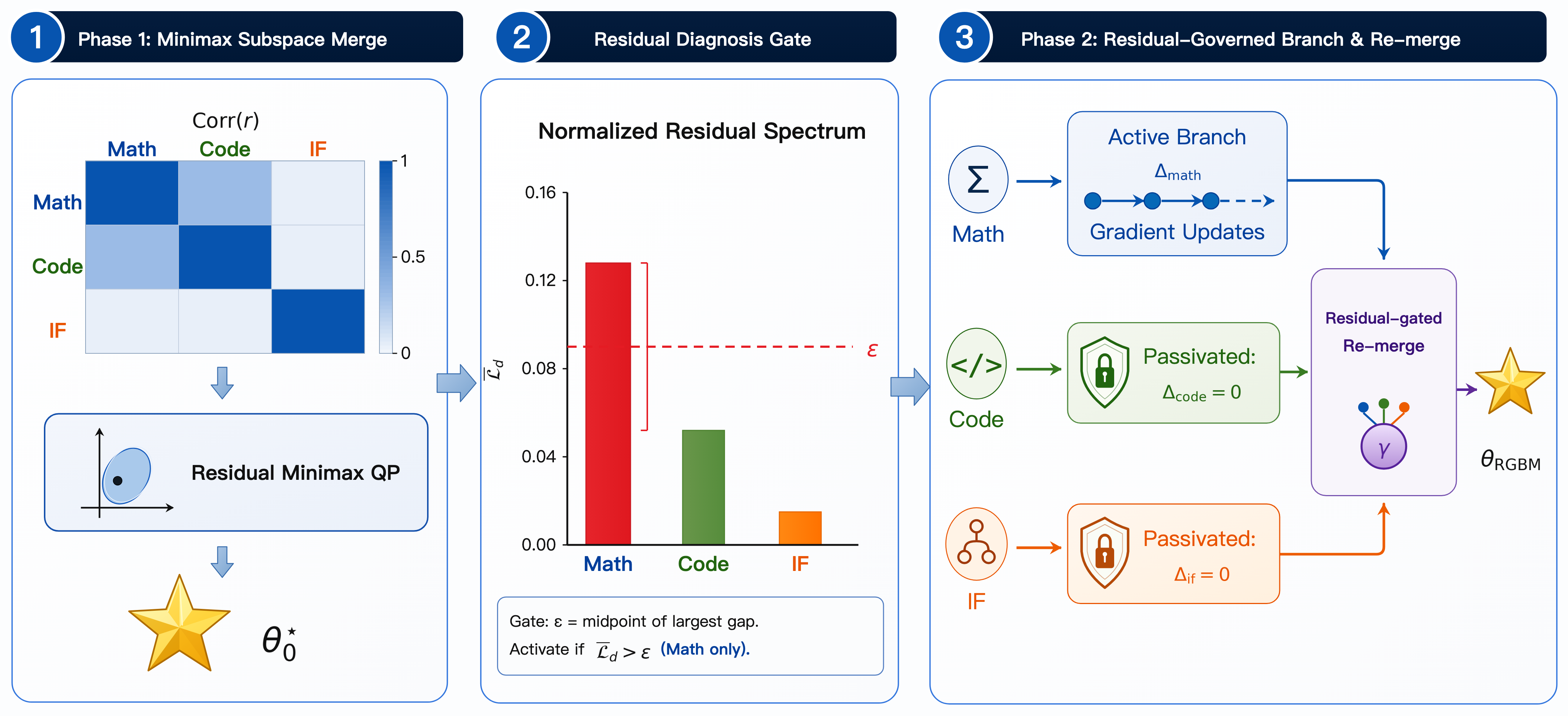}
\caption{\textbf{Two-phase RGBM architecture.} (1)~Phase~1 scores implicit rewards on unlabelled prompts, forms per-domain second-moment matrices $M_d$, and minimizes the worst residual $\mathcal{L}_d$ to obtain $\theta_0^\star$. The heatmap is the reward correlation, not $M_d$ itself. (2)~The bars are the energy-normalized Open-MOPD spectrum $\overline{\boldsymbol{\mathcal{L}}}$; the gate cuts at the midpoint of its largest adjacent gap (threshold $\epsilon$; Eq.~\ref{eq:epsilon_midgap}), so only Math is active. (3)~Phase~2 writes active updates back with shrinkage $\gamma$ to obtain $\theta_{\mathrm{RGBM}}$; if every domain is passivated, $\theta_{\mathrm{RGBM}}=\theta_0^\star$. See Sections~\ref{sec:method_phase1} and~\ref{sec:method_phase2} for details.}
\label{fig:rgbm_pipeline}
\end{figure}

\vspace*{-0.5\baselineskip}
\subsection{Phase 1: Minimax Subspace Merge and the Domain Residual Spectrum}
\label{sec:method_phase1}
\vspace*{-0.5\baselineskip}

\paragraph{Per-domain residual from unlabelled prompts.} Under the tangent-space homomorphism of Proposition~\ref{prop:homomorphism} the projected score equals $\mathbf{r}$ up to a mean shift, so $S_d \approx \mathrm{Cov}_{\pi_{W_{\phi_d}}}(\mathbf{r})$ can be estimated without gradients. Let $\mathcal{D}_{\mathrm{calib},d}$ be unlabelled prompts from domain $d$. Rather than center the rewards, we use the empirical second-moment $M_d := \mathbb{E}_{x\sim\mathcal{D}_{\mathrm{calib},d},\, y\sim\pi_{W_{\phi_d}}}[\mathbf{r}\mathbf{r}^\top] \in \mathbb{R}^{K \times K}$, assembled from forward passes of the base and the $K$ specialists on completions sampled from specialist $d$. $M_d$ equals that covariance plus a mean-shift outer product, and therefore upper-bounds the projected Fisher residual (Appendix~\ref{app:proof_residuals}). We define the per-domain residual and its energy-normalized counterpart as
\begin{equation}
\begin{aligned}
\mathcal{L}_d(\boldsymbol{\alpha})
&:= (\boldsymbol{\alpha} - \mathbf{e}_d)^\top M_d (\boldsymbol{\alpha} - \mathbf{e}_d), \\
\overline{\mathcal{L}}_d(\boldsymbol{\alpha})
&:= \frac{\mathcal{L}_d(\boldsymbol{\alpha})}{\mathbb{E}[r_d^2]}.
\end{aligned}
\label{eq:normalized_residual}
\end{equation}
Phase~1 optimizes $\mathcal{L}_d$. After the merge is fixed, division by $\mathbb{E}[r_d^2]$, the second moment of the specialist's own reward (its \emph{reward energy}), expresses the remainder as a fraction of domain-$d$ energy. Thus $\overline{\mathcal{L}}_d$ is comparable across domains and is used for diagnosis and gating, but not for solving the merge. Because $M_d$ contains the cross-domain products $\mathbb{E}[r_j r_k]$, $\overline{\mathcal{L}}_d(\boldsymbol{\alpha}^\star)$ is the fraction of domain-$d$ reward energy still unexplained after the coefficients have traded the domains off against one another. A value near zero means this joint combination already accounts for domain $d$. It does not mean an isolated branch on $d$ has nothing left to fit. A large value means a substantial share of that energy remains once the other domains have been accommodated. Section~\ref{sec:exp_objective} measures the merges obtained when the same programs are solved on $\overline{\mathcal{L}}_d$.

\paragraph{Minimax merge.} Eq.~\ref{eq:subspace_alpha} requires domain weights $p_d$, which are rarely known and trade one domain against another, so the Fisher stage of Section~\ref{sec:theory_subspace_ta} is not directly computable. We remove this degree of freedom by taking the worst case over all admissible weights, the probability simplex $\Delta_K$: since $\max_{\mathbf{p} \in \Delta_K} \sum_d p_d \mathcal{L}_d(\boldsymbol{\alpha}) = \max_d \mathcal{L}_d(\boldsymbol{\alpha})$, the problem becomes the convex Minimax quadratic program (QP)
\begin{equation}
\boldsymbol{\alpha}^\star = \arg\min_{\boldsymbol{\alpha} \in \mathbb{R}^K} \max_{d \in \{1, \dots, K\}} \mathcal{L}_d(\boldsymbol{\alpha}),
\label{eq:minimax_qp}
\end{equation}
i.e., the worst subspace residual under adversarially chosen domain weights. Its solution gives the zero-training checkpoint $\theta_0^\star := W_{\mathrm{ref}} + \mathcal{T} \boldsymbol{\alpha}^\star$. We then report the \emph{Domain Residual Spectrum} $\overline{\boldsymbol{\mathcal{L}}} := [\overline{\mathcal{L}}_1(\boldsymbol{\alpha}^\star), \dots, \overline{\mathcal{L}}_K(\boldsymbol{\alpha}^\star)]$ for cross-domain comparison and Phase~2 allocation. The QP is $K$-dimensional, so its cost is dominated by assembling the $M_d$ (Appendix~\ref{app:md_cost_analysis}). For numerical stability, the QP replaces $M_d$ by $\tilde{M}_d=M_d+\lambda I$; the reported spectrum uses the unregularized $M_d$. We leave $\boldsymbol{\alpha}$ unconstrained, so a coefficient may lie outside $[0,1]$ if that reduces the worst residual.

\vspace*{-0.5\baselineskip}
\subsection{Why Not Simply Keep Distilling from \texorpdfstring{$\theta_0^\star$}{the Merge}?}
\label{sec:method_whynot}
\vspace*{-0.5\baselineskip}

The quadratic model of Theorem~\ref{thm:mopd_fisher} is stationary at the Fisher merge, and approximately stationary on the subspace at $\theta(\boldsymbol{\alpha}_{\mathrm{sub}}^\star)$. Phase~1's $\theta_0^\star$ is not that point: it minimizes the worst residual, so $\nabla J_{\mathrm{MOPD}}(\theta_0^\star)$ need not vanish, and continued joint distillation is not a refinement of a solved linear stage. Two mechanisms outside the quadratic regime then make it harmful. Each domain divergence $f_d(\theta) := \mathbb{E}_{x\sim\mathcal{D}_d}[\mathrm{KL}(\pi_\theta\,\|\,\pi_{W_{\phi_d}})]$ has $\nabla_\theta f_d(\theta_0^\star) \neq \mathbf{0}$, so mini-batch steps on shared parameters pull in conflicting directions, including components off the task subspace. A near-zero normalized residual $\overline{\mathcal{L}}_d$ adds a second problem, and it is a joint one. An isolated branch can still reduce domain $d$'s own loss, because $\overline{\mathcal{L}}_d$ is the gap left after the cross-domain tradeoff, not a claim that domain $d$ is saturated. Writing that branch into shared weights moves every domain: the step is spent where the joint merge has already closed the gap, so the three-domain loss can worsen even while domain $d$ itself improves. Joint distillation from $\theta_0^\star$ should therefore degrade those low-residual domains on the shared model. A token-level KL penalty to $\pi_{\theta_0^\star}$ cannot prevent this, because the collision is in the weights. Section~\ref{sec:exp_joint_failure} checks both predictions.

\vspace*{-0.5\baselineskip}
\subsection{Phase 2: Residual Gate, Isolated Branches, and Shrinkage Re-merge}
\label{sec:method_phase2}
\label{sec:method_remerging}
\vspace*{-0.5\baselineskip}

RGBM therefore never applies a joint gradient to the merge; it partitions domains by the normalized spectrum $\overline{\boldsymbol{\mathcal{L}}}$ and treats the two groups differently.

\paragraph{Gate.} Sort $\overline{\mathcal{L}}_{(1)}\ge\cdots\ge\overline{\mathcal{L}}_{(K)}$, append a floor $\overline{\mathcal{L}}_{(K+1)}:=0$, and cut at the midpoint of the largest adjacent gap,
\begin{equation}
k^\star = \arg\max_{i=1,\dots,K}\bigl(\overline{\mathcal{L}}_{(i)}-\overline{\mathcal{L}}_{(i+1)}\bigr),
\qquad
\epsilon := \tfrac12\bigl(\overline{\mathcal{L}}_{(k^\star)}+\overline{\mathcal{L}}_{(k^\star+1)}\bigr).
\label{eq:epsilon_midgap}
\end{equation}
Domain $d$ is \emph{active} iff $\overline{\mathcal{L}}_d>\epsilon$, otherwise \emph{passivated}; the floor makes every domain active when normalized residuals cluster, and any $\epsilon$ inside the same gap yields the same allocation (Appendix~\ref{app:epsilon_sensitivity}).

\paragraph{Passivated and active domains.} Passivated domains ($\overline{\mathcal{L}}_d \le \epsilon$) receive no shared write-back ($\Delta_d=\mathbf{0}$). The joint merge already accounts for them, which does not preclude an isolated gain on that domain alone. Each active domain trains an isolated branch $\theta_d \leftarrow \theta_0^\star$ on its native data $\mathcal{D}_d$ with no gradient sharing, giving an update $\Delta_d = \theta_d^{(\mathrm{end})} - \theta_0^\star$; isolation guarantees that no gradient computed on domain $d$ touches a passivated domain during training. Whether the update can be written back without damaging them is decided next.

\paragraph{Shrinkage re-merge.} The sole Phase~2 write-back is a shrunken sum of the active updates,
\begin{equation}
\theta_{\mathrm{RGBM}} = \theta_0^\star + \sum_{d: \overline{\mathcal{L}}_d > \epsilon} \gamma_d^\star \, \Delta_d,
\qquad
\gamma_d^\star = \max \bigl\{\gamma \in \Gamma \mid \mathrm{Err}_{\mathrm{lin}}(\theta_0^\star + \gamma \Delta_d) \le \delta_{\mathrm{tol}}\bigr\},
\label{eq:rgbm_remerge_eq}
\end{equation}
where $\Gamma\subset(0,1]$ is a fixed discrete grid and $\mathrm{Err}_{\mathrm{lin}}$ is evaluated on the same unlabelled calibration prompts $\mathcal{D}_{\mathrm{calib}}=\bigcup_d\mathcal{D}_{\mathrm{calib},d}$ used for $M_d$.
$\gamma_d^\star$ is the largest grid value at which that single write-back still passes the linearity test of Section~\ref{sec:theory_homomorphism}, so the write-back keeps the merged model's log-ratio within $\delta_{\mathrm{tol}}$ of the Phase~1 superposition $\boldsymbol{\alpha}^{\star\top}\mathbf{r}$, which is held fixed as the predictor; Proposition~\ref{prop:local_basin_invariance} (Appendix~\ref{app:proof_remerge_basin}) bounds the first-order remainder at $\theta_0^\star+\gamma\Delta_d$ by a quantity that is non-decreasing in $\gamma$. When several branches are active, evaluation uses their shrunken sum. If the gate leaves no active domain, $\theta_{\mathrm{RGBM}}=\theta_0^\star$. The whole procedure has three fixed constants ($\lambda$, $\Gamma$, $\delta_{\mathrm{tol}}$) and no labelled data; their values and the resulting allocations on both suites are given in Section~\ref{sec:exp_setup}.

\vspace*{-1\baselineskip}
\section{Experiments}
\label{sec:experiments}

\vspace*{-0.5\baselineskip}
\subsection{Setup}
\vspace*{-0.5\baselineskip}
\label{sec:exp_setup}

\paragraph{Main suite (Open-MOPD).} Our primary testbed is the Open-MOPD specialist suite~\citep{openmopd2026}: SmolLM3-3B~\citep{smollm3_2025} with $K=3$ homologous RL specialists (the usual MOPD domain count;~\citealp{openmopd2026,d3mopd2026,li2025policyfiltration}), fine-tuned from the mixed-SFT checkpoint MixSFT as $W_{\mathrm{ref}}$. The specialists are Math, Code, and Instruction Following (IF), scored on AIME 2024/2025, LiveCodeBench v5/v6~\citep{jain2024livecodebench}, and IFEval/IFBench~\citep{zhou2023instruction,pyatkin2025ifbench}. We report $\text{Macro} = \frac{1}{3}(\text{Math} + \text{Code} + \text{IF})$, where IF is the mean of IFEval and IFBench, and re-evaluate every system under one protocol. The italic RouteRL row is the routing oracle of \citet{openmopd2026} that serves all $K$ specialists; Multi-Teacher OPD is their released $600$-step student. Every Phase~2 system starts from the same Phase~1 merge $\theta_0^\star$. TIES~\citep{yadav2023ties} and DARE~\citep{yu2024dare} sparsify and sum the isolated on-policy distillation (OPD) branch updates $\Delta_d$---not the specialist task vectors---and RGBM uses those same branches, differing only in what is written back (Appendix Table~\ref{tab:systems}). Per-benchmark numbers and hyperparameters are in Appendices~\ref{app:stage1_full_tables} and \ref{app:training_params}.

\paragraph{Secondary suite (G-OPD).} To test transfer across model families we apply the same procedure, unchanged, to the G-OPD suite~\citep{gopd2026}: Qwen3-4B~\citep{qwen3_2025} with Math and Code specialists (intermediate checkpoints of Table~2 in \citealp{gopd2026}; 500/300 steps) on their 7 official benchmarks. Supervised fine-tuning (SFT), OPD, and the extended recipe ExOPD are from \citet{gopd2026}; merges are evaluated by us under the official protocol. The italic Domain-Matched Teacher row is that specialist pair, an oracle rather than a single generalist. Per-benchmark results and the fully trained pair are in Appendix~\ref{app:gopd_benchmarks}.

\paragraph{RGBM constants and resulting allocations.} We use $\lambda=10^{-4}$, $\Gamma=\{0.125,0.25,0.5,0.75\}$, and $\delta_{\mathrm{tol}}=0.10$ on both suites. On Open-MOPD the energy-normalized spectrum is $\overline{\boldsymbol{\mathcal{L}}}=(0.128,0.052,0.015)$ for (Math, Code, IF); the largest gap is between Math and Code, so $\epsilon=0.090$, only Math is active, and Eq.~\ref{eq:rgbm_remerge_eq} selects $\gamma^\star_{\mathrm{math}}=0.125$, the largest $\gamma\in\Gamma$ with $\mathrm{Err}_{\mathrm{lin}}\le\delta_{\mathrm{tol}}$ ($\mathrm{Err}_{\mathrm{lin}}=0.0701$; Appendix Table~\ref{tab:p1_ablation_app}). On G-OPD, $\boldsymbol{\alpha}^\star=(0.675,0.666)$ and $\overline{\boldsymbol{\mathcal{L}}}=(0.094,0.090)$ cluster, so the largest gap is down to the floor, $\epsilon=0.045$, both domains are active, and the same rule gives $\gamma^\star=0.5$. Hereafter, low- and high-residual domains refer to this normalized spectrum. Assembling the $M_d$ costs $<15$ minutes on 8 GPUs ($<2$ GPU$\cdot$h) of unlabelled scoring and the QP solves in $<10$ ms.

\vspace*{-0.5\baselineskip}
\subsection{Merge Before You Distill: Zero-Training Merge vs.\ Trained Distillation}
\label{sec:exp_main_results}
\vspace*{-0.5\baselineskip}

\begin{wraptable}{r}{0.50\textwidth}
\vspace*{-0.75\baselineskip}
\centering
\caption{\textbf{Open-MOPD} (SmolLM3-3B; accuracy). Macro: mean of Math, Code, and IF. Bold: best deployable model. Italic: routing oracle. Gated Math: admitted branch only, at the listed $\gamma$.}
\label{tab:master_results}
\scriptsize
\renewcommand{\arraystretch}{0.85}\setlength{\tabcolsep}{2.5pt}
\begin{tabular}{@{}lcccc@{}}
\toprule
\textbf{Method} & \textbf{Math} & \textbf{Code} & \textbf{IF} & \textbf{Macro} \\
\midrule
RouteRL (Oracle Ensemble) & \textit{27.24} & \textit{26.27} & \textit{53.52} & \textit{35.68} \\
Multi-Teacher OPD & 24.53 & 21.18 & 47.12 & 30.93 \\
\midrule
Task Arithmetic & 24.35 & 22.81 & 46.77 & 31.31 \\
Subspace Fisher Merge & 24.11 & 23.59 & 47.99 & 31.90 \\
\textbf{RGBM Phase 1 (Ours, $\theta_0^\star$)} & 23.86 & 24.09 & 49.60 & 32.52 \\
\midrule
Uniform Distillation (from $\theta_0^\star$) & 23.97 & 21.36 & 47.42 & 30.91 \\
TIES (all branches) & 24.61 & 21.56 & 46.72 & 30.96 \\
DARE (all branches) & 24.58 & 22.40 & 47.28 & 31.42 \\
\midrule
Gated Math ($\gamma=1.0$) & 25.03 & 22.51 & 47.87 & 31.80 \\
Gated Math ($\gamma=0.5$) & 25.42 & 23.48 & 48.08 & 32.32 \\
\textbf{RGBM Phase 2 (Ours)} & 23.93 & 24.63 & 50.52 & 33.03 \\
\bottomrule
\end{tabular}
\vspace*{-0.5\baselineskip}
\end{wraptable}
\paragraph{The closed-form merge matches or beats trained distillation.} On Open-MOPD (Table~\ref{tab:master_results}) the Minimax merge $\theta_0^\star$ reaches $32.52$ Macro at zero gradient cost, above Multi-Teacher OPD, the released $600$-step student ($30.93$; $144$ GPU$\cdot$h), and above isotropic Task Arithmetic ($31.31$). On G-OPD (Table~\ref{tab:gopd_main_results}) the same procedure gives $53.34$, above SFT, OPD, and Task Arithmetic and within $0.45$ of the trained ExOPD recipe (equal-weight averaging collapses to $36.75$: the gain is in the coefficients). The $31.90$ row is the subspace Fisher solution of Eq.~\ref{eq:subspace_alpha} under uniform weights $p_d=1/K$, i.e., the minimizer of $\sum_d p_d\,\mathcal{L}_d(\boldsymbol{\alpha})$ with the Phase~1 second moments $M_d$ in place of $S_d$. Eq.~\ref{eq:subspace_alpha} requires the domain weights $p_d$, and those weights are not known; the uniform choice is a default, not an estimate. Minimax, which minimizes the worst residual instead, is $0.62$ higher and lifts IF from $47.99$ to $49.60$---when no domain may be sacrificed, the adversarial weighting is the operational choice.

\paragraph{RGBM improves on the merge on both suites.} Writing back only the gated Math branch raises Open-MOPD to $33.03$ (+0.51 over $\theta_0^\star$) with zero extra joint-training steps; on G-OPD both domains are active, and writing back both isolated branches gives $54.14$, above ExOPD.

\vspace*{-0.5\baselineskip}
\subsection{The Quadratic That Is Minimized}
\label{sec:exp_objective}
\vspace*{-0.5\baselineskip}

Phase~1 minimizes $\mathcal{L}_d$. This quadratic upper-bounds the domain term $(\boldsymbol{\alpha}-\mathbf{e}_d)^\top S_d(\boldsymbol{\alpha}-\mathbf{e}_d)$ of $J_{\mathrm{sub}}$ (Eq.~\ref{eq:subspace_fisher_obj}; Appendix~\ref{app:proof_residuals}), since $M_d$ equals $S_d\approx\mathrm{Cov}(\mathbf{r})$ plus a mean-shift outer product. The uniform row above is that bound, with $M_d$ in place of $S_d$. The factor $1/\mathbb{E}[r_d^2]$ is applied after $\boldsymbol{\alpha}$ is chosen, which rescales the reported spectrum and leaves the minimizer in place. Solving the same programs on $M_d/\mathbb{E}[r_d^2]$ weights domain $d$ by $p_d/\mathbb{E}[r_d^2]$. On the Open-MOPD calibration the energies are $\mathbb{E}[r_d^2]=(0.180,0.692,2.40)$ for (Math, Code, IF), so that program down-weights Code relative to Math by a factor of $3.8$. Task arithmetic already treats the coefficient as a control on accuracy and picks it on held-out labels~\citep{ilharco2023editing}. The solve below uses no labels.

\begin{table}[t]
\centering
\captionsetup{font=footnotesize,skip=2pt}
\caption{\textbf{Open-MOPD coefficient grid.} A minimizes the mean residual (uniform $p_d$, Eq.~\ref{eq:subspace_alpha}); C minimizes the worst residual (Eq.~\ref{eq:minimax_qp}). Scalar shares one coefficient; vector leaves $\boldsymbol{\alpha}$ free. Scores for the four $\mathcal{L}$ rows and for $\boldsymbol{\alpha}=\mathbf{1}/3$ are Table~\ref{tab:stage1_full_breakdown}; the four $\overline{\mathcal{L}}$ rows use the same protocol. Coefficients are Math/Code/IF.}
\label{tab:s1_objective}
\footnotesize
\renewcommand{\arraystretch}{0.85}\setlength{\tabcolsep}{2.5pt}
\begin{tabular}{@{}llcccc@{}}
\toprule
\textbf{Program} & $\boldsymbol{\alpha}$ & \textbf{Math} & \textbf{Code} & \textbf{IF} & \textbf{Macro} \\
\midrule
$\boldsymbol{\alpha}=\mathbf{1}/3$ & 0.333/0.333/0.333 & 20.81 & 19.15 & 44.77 & 28.24 \\
\midrule
A scalar $\mathcal{L}$ & 0.626/0.626/0.626 & 24.97 & 23.28 & 47.64 & 31.96 \\
A vector $\mathcal{L}$ & 0.527/0.609/0.821 & 24.11 & 23.59 & 47.99 & 31.90 \\
C scalar $\mathcal{L}$ & 0.579/0.579/0.579 & 24.97 & 23.49 & 45.10 & 31.19 \\
C vector $\mathcal{L}$ ($\theta_0^\star$) & 0.597/0.620/0.800 & 23.86 & 24.09 & 49.60 & \textbf{32.52} \\
\midrule
A scalar $\overline{\mathcal{L}}$ & 0.501/0.501/0.501 & 24.38 & 23.36 & 44.86 & 30.86 \\
A vector $\overline{\mathcal{L}}$ & 0.840/0.260/0.841 & 24.19 & 19.57 & 48.00 & 30.59 \\
C scalar $\overline{\mathcal{L}}$ & 0.477/0.477/0.477 & 21.41 & 19.76 & 45.79 & 28.99 \\
C vector $\overline{\mathcal{L}}$ & 0.724/0.426/0.630 & 24.22 & 20.73 & 47.02 & 30.66 \\
\bottomrule
\end{tabular}
\end{table}

Every program solved on $\mathcal{L}$ lands in Macro $31.19\sim 32.52$. Every program solved on $\overline{\mathcal{L}}$ lands in $28.99\sim 30.86$, so the best normalized merge ($30.86$) sits below the worst raw merge ($31.19$). The reweighting shows up on Code. The uniform vector on $\mathcal{L}$ keeps $\alpha_{\mathrm{code}}=0.609$ and Code at $23.59$; the same program on $\overline{\mathcal{L}}$ sets $\alpha_{\mathrm{code}}=0.260$ and Code falls to $19.57$. The published minimax vector keeps $\alpha_{\mathrm{code}}=0.620$ and Code at $24.09$. Equal-weight $\boldsymbol{\alpha}=\mathbf{1}/3$, which solves neither program, is lower still ($28.24$).

\begin{figure*}[t]
\centering
\includegraphics[width=0.98\textwidth]{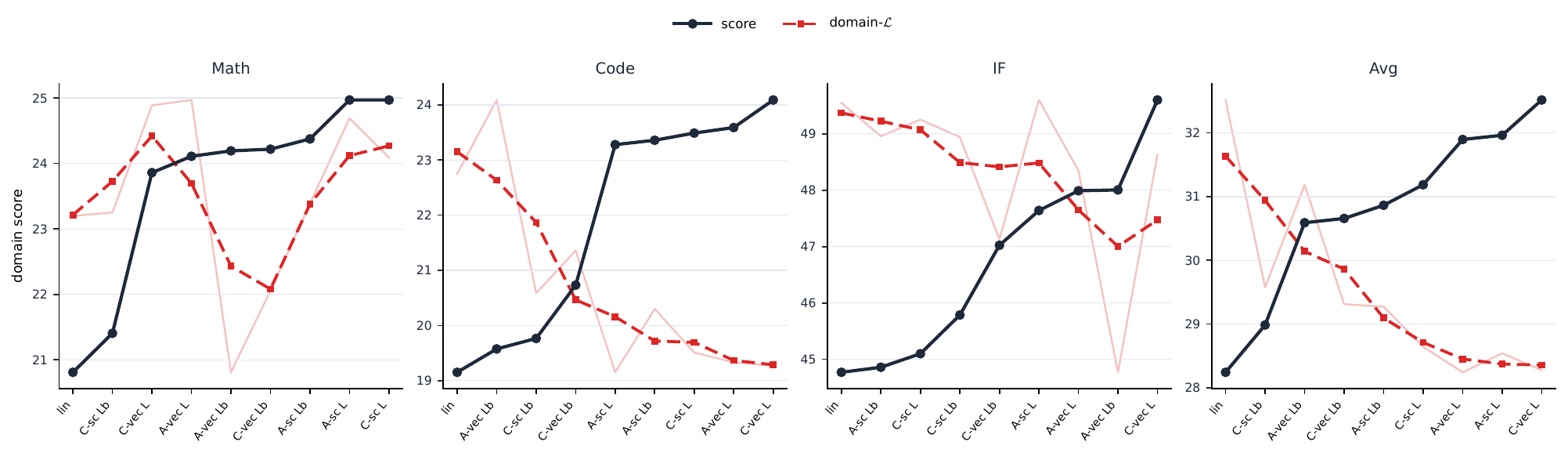}
\caption{\textbf{Domain score against the raw residual (Open-MOPD).} Each panel orders the nine merges of Table~\ref{tab:s1_objective} by that panel's score, so the score curve rises by construction. Domain-$\mathcal{L}$ is min-max scaled onto that score range. A/C: uniform and minimax. sc/vec: shared scalar and free vector. L / Lb: the program minimizes $\mathcal{L}$ or $\mathcal{L}_d/\mathbb{E}[r_d^2]$. lin: $\boldsymbol{\alpha}=\mathbf{1}/3$. The energy-normalized spectrum is used only for the Phase~2 gate in Section~\ref{sec:exp_setup}.}
\label{fig:s1_objective}
\end{figure*}

With $M_d$ held fixed, $\mathcal{L}_d(\boldsymbol{\alpha})$ is a function of the coefficients alone. Figure~\ref{fig:s1_objective} orders the nine merges by score and overlays that raw residual, min-max scaled onto the score range. On Code, $\alpha_{\mathrm{code}}\ge 0.50$ keeps Code at $23.28\sim 24.09$ and $\alpha_{\mathrm{code}}\le 0.48$ puts it at $19.15\sim 20.73$ (Pearson $+0.89$ between $\alpha_{\mathrm{code}}$ and Code). On IF, $\alpha_{\mathrm{if}}$ and IF have Pearson $+0.87$, from $44.77$ at $\alpha_{\mathrm{if}}=0.333$ to $49.60$ at the published $\alpha_{\mathrm{if}}=0.800$. On Math, $\alpha_{\mathrm{math}}$ separates the floor: the two vectors with $\alpha_{\mathrm{math}}<0.5$ score $20.81$ and $21.41$, and every vector with $\alpha_{\mathrm{math}}\ge 0.5$ stays in $23.86\sim 24.97$. The coefficients track Code and IF, and they track whether Math has cleared that floor.

\vspace*{-0.5\baselineskip}
\subsection{Why the Write-Back Must Be Gated: Joint Distillation from the Merge}
\label{sec:exp_joint_failure}
\label{sec:analysis_joint_failure}
\vspace*{-0.5\baselineskip}

\begin{figure}[t]
\centering
\begin{subfigure}[b]{0.42\textwidth}
\centering
\includegraphics[width=\linewidth]{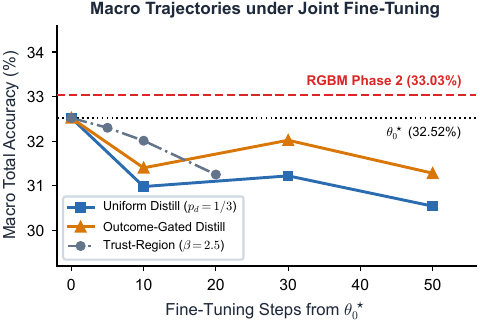}
\caption{Macro trajectory}
\label{fig:traj_macro}
\end{subfigure}
\hfill
\begin{subfigure}[b]{0.42\textwidth}
\centering
\includegraphics[width=\linewidth]{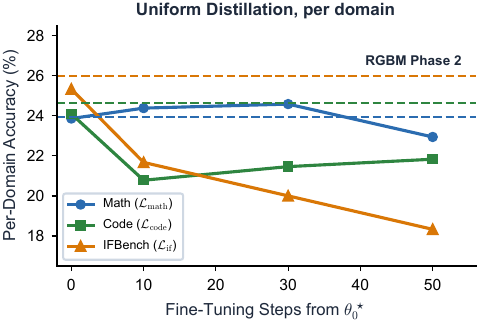}
\caption{Uniform Distillation, per domain}
\label{fig:traj_domain}
\end{subfigure}
\caption{\textbf{Joint distillation from $\theta_0^\star$ (Open-MOPD, $\le 50$ steps).} (a)~Macro under three joint objectives; dashed and dotted lines are RGBM Phase~2 and $\theta_0^\star$. (b)~Uniform Distillation: Math stays near the merge before a late drop; Code and IFBench fall. Dashed lines mark RGBM Phase~2 on those axes.}
\vspace*{-2\baselineskip}
\label{fig:trajectories}
\end{figure}

The obvious second stage is to keep distilling $\theta_0^\star$ jointly on all domains; Section~\ref{sec:method_whynot} argued that this should erode the low-residual domains. Figure~\ref{fig:trajectories} tracks three joint objectives initialized from $\theta_0^\star$: \emph{Uniform} (equal-weight reverse-KL), \emph{Outcome-Gated} (matching the teacher only on failed rollouts), and \emph{Trust-Region} (outcome-gated reverse-KL with a KL penalty $\beta=2.5$ toward $\pi_{\theta_0^\star}$). Appendix~\ref{app:stage2_scatter} extends this analysis to five distinct objectives.
(1)~\textbf{Low-residual domains deteriorate.} At the last listed checkpoint of each recipe trained to step $20$ or $50$, Code drops by $1.1\sim 2.9$ points below $\theta_0^\star$, IF falls behind the merge, and Macro stays in $30.5\sim 32.3$. Joint distillation therefore undoes what Phase~1 had already recovered.
(2)~\textbf{High-residual Math improves only marginally.} Uniform and Outcome-Gated trajectory means are $23.97$ and $24.69$ (Appendix~\ref{app:stage2_scatter}), against $23.86$ at $\theta_0^\star$. Continued joint steps do not convert the Math residual into a gain that offsets the drop on Code and IF.
(3)~\textbf{Interference is parameter-level.} Even when the Code reverse-KL is held below $10^{-3}$ nat/token, Code still degrades. The student can stay close to the Code teacher in distribution space, so the damage is coming from shared-weight updates, not from a Code-teacher mismatch.
(4)~\textbf{A token-level KL penalty cannot prevent collapse.} The Trust-Region anchor bounds estimated policy drift to $\widehat{\mathrm{KL}}(\pi_\theta\|\pi_{\theta_0^\star})\approx 1.4\times 10^{-3}$ nat/token, yet Macro decays to $31.25$ by step 20 (Code drops $24.09\to 21.18$). A scalar KL only restricts isotropic divergence; it does not block interference along the anisotropic directions captured by $\overline{\mathcal{L}}_d$.

\vspace*{-1\baselineskip}
\subsection{The Residual Spectrum Identifies Which Updates Can Be Written Back Safely}
\label{sec:exp_safe_writeback}
\vspace*{-0.5\baselineskip}

\begin{wraptable}{r}{0.50\textwidth}
\vspace*{-0.75\baselineskip}
\centering
\caption{\textbf{G-OPD} (Qwen3-4B). Math / Code: mean of the 4 math / 3 code tasks. Avg: 7-task mean. Italic: teacher oracle.}
\label{tab:gopd_main_results}
\footnotesize
\renewcommand{\arraystretch}{0.75}\setlength{\tabcolsep}{3.5pt}
\begin{tabular}{@{}lccc@{}}
\toprule
\textbf{Method} & \textbf{Math} & \textbf{Code} & \textbf{Avg} \\
\midrule
Domain-Matched Teacher & \textit{46.00} & \textit{61.17} & \textit{52.50} \\
\midrule
SFT & 44.33 & 60.80 & 51.39 \\
OPD & 46.38 & 60.57 & 52.46 \\
ExOPD & \textbf{47.65} & 61.97 & 53.79 \\
\midrule
Task Arithmetic & 43.05 & 60.65 & 50.59 \\
Equal-Weight Average & 22.55 & 55.67 & 36.75 \\
\textbf{RGBM Phase 1 (Ours)} & 46.17 & 62.89 & 53.34 \\
\midrule
\textbf{RGBM Phase 2 (Ours)} & 47.55 & \textbf{62.93} & \textbf{54.14} \\
\bottomrule
\end{tabular}
\vspace*{-0.5\baselineskip}
\end{wraptable}
The normalized spectrum $\overline{\boldsymbol{\mathcal{L}}}=(0.128,0.052,0.015)$ is computed from the Phase~1 solution before any Phase~2 update, and the gate it induces admits only the Math branch. We test this decision against the alternatives it rejects. Writing back the admitted Math update ($\gamma^\star=0.125$) raises Macro to $33.03$. Writing back the rejected IF update at a mid-grid factor ($\theta_0^\star+0.5\,\Delta_{\mathrm{if}}$; IF has no $\gamma^\star_{\mathrm{if}}$) lowers Macro to $31.10$ and Code to $20.70$ (Appendix~\ref{app:epsilon_sensitivity}). Writing back all three, as un-gated TIES and DARE do, gives $30.96$ and $31.42$, both below the merge. On G-OPD the normalized spectrum clusters, the gate keeps both domains, and writing back both raises the average by $0.80$.

We stress what this does and does not show. Isolated branches raise their own domain under all three objectives (Appendix Table~\ref{tab:residual_prediction}): IF by $+0.48\sim+1.63$, Code by $+0.43\sim+1.31$, and Math by $+0.58\sim+1.83$. A low residual therefore does not mean a domain has nothing left to learn on its own: $\overline{\mathcal{L}}_d$ is the coupled gap at $\boldsymbol{\alpha}^\star$, after all domains have been traded off together. A write-back is one shared update, so it moves domains other than the branch that was trained, and it is scored on Math, Code, and IF together. On that joint score the rejected IF write-back lowers Macro, while the admitted Math write-back raises it only at $\gamma^\star=0.125$. The spectrum chooses which branch may enter the shared model. The shrinkage grid chooses how far that branch may move it.

\vspace*{-0.5\baselineskip}
\subsection{Ablations: Gate, Isolation, and Shrinkage}
\label{sec:exp_ablation}
\vspace*{-0.5\baselineskip}

The lower half of Table~\ref{tab:master_results} isolates each RGBM component (full $\gamma$ grid in Appendix~\ref{app:gamma_sweep}). Uniform Distillation continues joint training; TIES and DARE write back every isolated branch; Gated Math writes back only the admitted branch, with or without shrinkage. Without the gate, every write-back falls below the merge, whether updates are trained jointly or in isolation, dense or sparsified, because the Code and IF updates damage shared weights. With the gate but $\gamma=1$, the Math branch alone still costs $0.72$ Macro; intermediate $\gamma$ trades Math against Code and IF (Appendix~\ref{app:gamma_sweep}). The unlabelled rule selects $\gamma^\star=0.125$, the only setting above the merge.

\paragraph{Where does the gain come from?} The $+0.51$ Macro of RGBM is not a Math gain: at $\gamma^\star=0.125$, Math moves only $+0.07$. It is carried by Code ($+0.54$) and IF ($+0.92$), the two passivated domains, which the small Math write-back moves through the shared weights. Protection explains a different comparison: relative to the merge, the un-gated write-backs lose $1.1\sim 1.6$ Macro, and the gate removes that loss. Larger $\gamma$ improves Math ($+1.56$ at $\gamma=0.5$) only by sacrificing Code and IF. Across all checkpoints of Appendix~\ref{app:stage2_scatter}, the ordering un-gated write-backs $<$ merge $<$ RGBM holds.

\vspace*{-0.5\baselineskip}
\subsection{Linearity Diagnostics and the Limits of the Theory}
\label{sec:exp_linearity}
\label{sec:analysis_linearity_code}
\vspace*{-0.5\baselineskip}

Appendix Figure~\ref{fig:diagnostics} is a sanity check of Proposition~\ref{prop:homomorphism} on the isotropic family $\theta(\alpha)=W_{\mathrm{ref}}+\alpha\sum_d\tau_d$ of Remark~\ref{rem:task_arith}, not on the Minimax vector $\boldsymbol{\alpha}^\star$. Regressing the merged log-ratio on the superposed reward over 38M tokens yields $R^2=0.935$ at $\alpha=0.75$. At $\alpha=1.50$ the fit breaks ($R^2=0.557$), the cross-domain interference ratio $\mathcal{I}_{\mathrm{cross}}$ (the share of merged log-ratio energy not explained by the in-domain reward) rises to $0.775$, and Macro falls to $22.48$---a failure $\mathrm{Err}_{\mathrm{lin}}$ flags without labels. Branch updates are not small relative to the merge ($\|\Delta_d\|/\|\mathcal{T}\boldsymbol{\alpha}^\star\|\approx 22\%\sim 29\%$), and the unshrunk Math write-back already breaks the fit ($\mathrm{Err}_{\mathrm{lin}}=0.72$ at $\gamma=1$); shrinking to $\gamma^\star=0.125$ brings it back to $0.070\le\delta_{\mathrm{tol}}$ (Appendix Table~\ref{tab:p1_ablation_app}).

\vspace*{-1\baselineskip}
\section{Related Work}
\label{sec:related_work}
\vspace*{-1\baselineskip}

\paragraph{Parameter-space merging.} Training-free fusion spans Task Arithmetic~\citep{ilharco2022patching,ilharco2023editing}, sparsification~\citep{yadav2023ties,yu2024dare,jiang2025breadcrumbs}, routing~\citep{lu2025twin}, and second-order schemes such as Fisher averaging~\citep{matena2022merging,he2025localfisher} and RegMean~\citep{jin2023dataless}. Whereas AdaMerging~\citep{yang2024adamerging} optimizes black-box weights, Phase~1 obtains merge coefficients from a residual program on unlabelled prompts and normalizes the resulting per-domain residuals for diagnosis; Remark~\ref{rem:task_arith} identifies the related subspace Fisher projection of the quadratic. Extending tangent-space analyses of merging~\citep{ortizjimenez2023tangent}, Proposition~\ref{prop:homomorphism} establishes reward-space linearity, and $\mathrm{Err}_{\mathrm{lin}}$ diagnoses departures from it.

\vspace*{-0.5\baselineskip}
\paragraph{On-policy and multi-teacher distillation.} Beyond classical distillation~\citep{hinton2015distilling,kim2016sequence,you2017learning}, on-policy distillation trains on student rollouts~\citep{gu2023knowledge,agarwal2024onpolicy,gudibande2024false,zhang2025steppolicy,chen2025rdistill,liu2026prefix,yan2026jieping,liu2025did}; multi-teacher variants~\citep{openmopd2026,d3mopd2026,gopd2026,li2025policyfiltration} consolidate specialists via joint training from base weights. We show that, to second order, this objective has an optimal closed-form stage, and we gate subsequent updates by the normalized residual spectrum.

\vspace*{-1\baselineskip}
\section{Conclusion}
\label{sec:conclusion}
\vspace*{-1\baselineskip}

We bridge parameter merging and multi-teacher distillation by showing that to second order, the MOPD objective coincides with Fisher merging. Parameter merging is thus optimal, to second order, as a closed-form stage of distillation. For tractability, we restrict that merge to the task-vector subspace and, because the domain weights are unknown, replace them by their worst case: a residual program on unlabelled prompts yields the task-vector weights, and post-hoc reward-energy normalization yields the domain residual spectrum. Empirically, our zero-training merge beats a distilled student on Open-MOPD and approaches the strongest trained baseline on G-OPD, while residual-gated RGBM advances performance further. Un-gated write-backs degrade below the merge, confirming that post-merge gains are localized, with the normalized spectrum identifying the updates to test for safe write-back. Limitations are discussed in Appendix~\ref{app:limitations}.

\section{LLM Usage}
\label{sec:llm_usage}

In accordance with conference guidelines, we disclose the use of large language models (LLMs) in the preparation of this manuscript. Mainstream LLMs were used exclusively for grammar correction, stylistic polishing, and editorial refinement to enhance the readability and fluency of the text.

\clearpage
\bibliography{iclr2027_conference,my_paper}

@inproceedings{ilharco2023editing,
  author       = {Gabriel Ilharco and
                  Marco T{\'{u}}lio Ribeiro and
                  Mitchell Wortsman and
                  Ludwig Schmidt and
                  Hannaneh Hajishirzi and
                  Ali Farhadi},
  title        = {Editing models with task arithmetic},
  booktitle    = {The Eleventh International Conference on Learning Representations,
                  {ICLR} 2023, Kigali, Rwanda, May 1-5, 2023},
  publisher    = {OpenReview.net},
  year         = {2023},
  url          = {https://openreview.net/forum?id=6t0Kwf8-jrj},
  bibsource    = {dblp computer science bibliography, https://dblp.org}
}

@inproceedings{yadav2023ties,
  author       = {Prateek Yadav and
                  Derek Tam and
                  Leshem Choshen and
                  Colin A. Raffel and
                  Mohit Bansal},
  editor       = {Alice Oh and
                  Tristan Naumann and
                  Amir Globerson and
                  Kate Saenko and
                  Moritz Hardt and
                  Sergey Levine},
  title        = {TIES-Merging: Resolving Interference When Merging Models},
  booktitle    = {Advances in Neural Information Processing Systems 36: Annual Conference
                  on Neural Information Processing Systems 2023, NeurIPS 2023, New Orleans,
                  LA, USA, December 10 - 16, 2023},
  year         = {2023},
  url          = {http://papers.nips.cc/paper\_files/paper/2023/hash/1644c9af28ab7916874f6fd6228a9bcf-Abstract-Conference.html},
  bibsource    = {dblp computer science bibliography, https://dblp.org}
}

@inproceedings{yu2024dare,
  author       = {Le Yu and
                  Bowen Yu and
                  Haiyang Yu and
                  Fei Huang and
                  Yongbin Li},
  editor       = {Ruslan Salakhutdinov and
                  Zico Kolter and
                  Katherine A. Heller and
                  Adrian Weller and
                  Nuria Oliver and
                  Jonathan Scarlett and
                  Felix Berkenkamp},
  title        = {Language Models are Super Mario: Absorbing Abilities from Homologous
                  Models as a Free Lunch},
  booktitle    = {Forty-first International Conference on Machine Learning, {ICML} 2024,
                  Vienna, Austria, July 21-27, 2024},
  series       = {Proceedings of Machine Learning Research},
  volume       = {235},
  pages        = {57755--57775},
  publisher    = {{PMLR} / OpenReview.net},
  year         = {2024},
  url          = {https://proceedings.mlr.press/v235/yu24p.html},
  bibsource    = {dblp computer science bibliography, https://dblp.org}
}

@inproceedings{zheng2024expo,
  author       = {Chujie Zheng and
                  Ziqi Wang and
                  Heng Ji and
                  Minlie Huang and
                  Nanyun Peng},
  title        = {Weak-to-Strong Extrapolation Expedites Alignment},
  journal      = {CoRR},
  volume       = {abs/2404.16792},
  year         = {2024},
  url          = {https://doi.org/10.48550/arXiv.2404.16792},
  doi          = {10.48550/ARXIV.2404.16792},
  eprinttype   = {arXiv},
  eprint       = {2404.16792},
  bibsource    = {dblp computer science bibliography, https://dblp.org}
}

@article{gopd2026,
  author       = {Wenkai Yang and
                  Weijie Liu and
                  Ruobing Xie and
                  Kai Yang and
                  Saiyong Yang and
                  Yankai Lin},
  title        = {Learning beyond Teacher: Generalized On-Policy Distillation with Reward
                  Extrapolation},
  journal      = {CoRR},
  volume       = {abs/2602.12125},
  year         = {2026},
  url          = {https://doi.org/10.48550/arXiv.2602.12125},
  doi          = {10.48550/ARXIV.2602.12125},
  eprinttype   = {arXiv},
  eprint       = {2602.12125},
  bibsource    = {dblp computer science bibliography, https://dblp.org}
}

@article{openmopd2026,
  author       = {Huan{-}ang Gao and
                  Haohan Chi and
                  Yong Yan and
                  Shiyuan Feng and
                  Hanlin Wu and
                  Zheng Jiang and
                  Bingxiang He and
                  Wei{-}Ying Ma and
                  Ya{-}Qin Zhang and
                  Hao Zhou},
  title        = {Open-MOPD: Diagnosing and Fixing Capability Imbalance in Multi-Teacher
                  On-Policy Distillation},
  journal      = {CoRR},
  volume       = {abs/2608.19098},
  year         = {2026},
  url          = {https://doi.org/10.48550/arXiv.2608.19098},
  doi          = {10.48550/ARXIV.2608.19098},
  eprinttype   = {arXiv},
  eprint       = {2608.19098},
  bibsource    = {dblp computer science bibliography, https://dblp.org}
}

@article{d3mopd2026,
  author       = {Zechen Sun and
                  Zhiwei Zhang and
                  Fei Zhao and
                  Juntao Li and
                  Mu Chuan and
                  Huayu Deng and
                  Guojian Zhan and
                  Wenliang Chen and
                  Yao Hu and
                  Min Zhang},
  title        = {D\({}^{\mbox{3}}\)-MOPD: Adaptive Dynamic Domain ScheDuling for Efficient
                  Multi-Teacher Distillation},
  journal      = {CoRR},
  volume       = {abs/2608.24987},
  year         = {2026},
  url          = {https://doi.org/10.48550/arXiv.2608.24987},
  doi          = {10.48550/ARXIV.2608.24987},
  eprinttype   = {arXiv},
  eprint       = {2608.24987},
  bibsource    = {dblp computer science bibliography, https://dblp.org}
}

@inproceedings{rafailov2023direct,
  author       = {Rafael Rafailov and
                  Archit Sharma and
                  Eric Mitchell and
                  Christopher D. Manning and
                  Stefano Ermon and
                  Chelsea Finn},
  editor       = {Alice Oh and
                  Tristan Naumann and
                  Amir Globerson and
                  Kate Saenko and
                  Moritz Hardt and
                  Sergey Levine},
  title        = {Direct Preference Optimization: Your Language Model is Secretly a
                  Reward Model},
  booktitle    = {Advances in Neural Information Processing Systems 36: Annual Conference
                  on Neural Information Processing Systems 2023, NeurIPS 2023, New Orleans,
                  LA, USA, December 10 - 16, 2023},
  year         = {2023},
  url          = {http://papers.nips.cc/paper\_files/paper/2023/hash/a85b405ed65c6477a4fe8302b5e06ce7-Abstract-Conference.html},
  bibsource    = {dblp computer science bibliography, https://dblp.org}
}

@inproceedings{gu2023knowledge,
  author       = {Yuxian Gu and
                  Li Dong and
                  Furu Wei and
                  Minlie Huang},
  title        = {MiniLLM: Knowledge Distillation of Large Language Models},
  booktitle    = {The Twelfth International Conference on Learning Representations,
                  {ICLR} 2024, Vienna, Austria, May 7-11, 2024},
  publisher    = {OpenReview.net},
  year         = {2024},
  url          = {https://openreview.net/forum?id=5h0qf7IBZZ},
  bibsource    = {dblp computer science bibliography, https://dblp.org}
}

@inproceedings{agarwal2024onpolicy,
  author       = {Rishabh Agarwal and
                  Nino Vieillard and
                  Yongchao Zhou and
                  Piotr Stanczyk and
                  Sabela Ramos Garea and
                  Matthieu Geist and
                  Olivier Bachem},
  title        = {On-Policy Distillation of Language Models: Learning from Self-Generated
                  Mistakes},
  booktitle    = {The Twelfth International Conference on Learning Representations,
                  {ICLR} 2024, Vienna, Austria, May 7-11, 2024},
  publisher    = {OpenReview.net},
  year         = {2024},
  url          = {https://openreview.net/forum?id=3zKtaqxLhW},
  bibsource    = {dblp computer science bibliography, https://dblp.org}
}

@inproceedings{matena2022merging,
  author       = {Michael Matena and
                  Colin Raffel},
  editor       = {Sanmi Koyejo and
                  S. Mohamed and
                  A. Agarwal and
                  Danielle Belgrave and
                  K. Cho and
                  A. Oh},
  title        = {Merging Models with Fisher-Weighted Averaging},
  booktitle    = {Advances in Neural Information Processing Systems 35: Annual Conference
                  on Neural Information Processing Systems 2022, NeurIPS 2022, New Orleans,
                  LA, USA, November 28 - December 9, 2022},
  year         = {2022},
  url          = {http://papers.nips.cc/paper\_files/paper/2022/hash/70c26937fbf3d4600b69a129031b66ec-Abstract-Conference.html},
  bibsource    = {dblp computer science bibliography, https://dblp.org}
}

@inproceedings{jin2023dataless,
  author       = {Xisen Jin and
                  Xiang Ren and
                  Daniel Preotiuc{-}Pietro and
                  Pengxiang Cheng},
  title        = {Dataless Knowledge Fusion by Merging Weights of Language Models},
  booktitle    = {The Eleventh International Conference on Learning Representations,
                  {ICLR} 2023, Kigali, Rwanda, May 1-5, 2023},
  publisher    = {OpenReview.net},
  year         = {2023},
  url          = {https://openreview.net/forum?id=FCnohuR6AnM},
  bibsource    = {dblp computer science bibliography, https://dblp.org}
}

@inproceedings{ortizjimenez2023tangent,
  author       = {Guillermo Ortiz{-}Jim{\'{e}}nez and
                  Alessandro Favero and
                  Pascal Frossard},
  editor       = {Alice Oh and
                  Tristan Naumann and
                  Amir Globerson and
                  Kate Saenko and
                  Moritz Hardt and
                  Sergey Levine},
  title        = {Task Arithmetic in the Tangent Space: Improved Editing of Pre-Trained
                  Models},
  booktitle    = {Advances in Neural Information Processing Systems 36: Annual Conference
                  on Neural Information Processing Systems 2023, NeurIPS 2023, New Orleans,
                  LA, USA, December 10 - 16, 2023},
  year         = {2023},
  url          = {http://papers.nips.cc/paper\_files/paper/2023/hash/d28077e5ff52034cd35b4aa15320caea-Abstract-Conference.html},
  bibsource    = {dblp computer science bibliography, https://dblp.org}
}

@inproceedings{sheng2024verl,
  author       = {Guangming Sheng and
                  Chi Zhang and
                  Zilingfeng Ye and
                  Xibin Wu and
                  Wang Zhang and
                  Ru Zhang and
                  Yanghua Peng and
                  Haibin Lin and
                  Chuan Wu},
  title        = {HybridFlow: {A} Flexible and Efficient {RLHF} Framework},
  booktitle    = {Proceedings of the Twentieth European Conference on Computer Systems,
                  EuroSys 2025, Rotterdam, The Netherlands, 30 March 2025 - 3 April
                  2025},
  pages        = {1279--1297},
  publisher    = {{ACM}},
  year         = {2025},
  url          = {https://doi.org/10.1145/3689031.3696075},
  doi          = {10.1145/3689031.3696075},
  bibsource    = {dblp computer science bibliography, https://dblp.org}
}

@inproceedings{jain2024livecodebench,
  author       = {Naman Jain and
                  King Han and
                  Alex Gu and
                  Wen{-}Ding Li and
                  Fanjia Yan and
                  Tianjun Zhang and
                  Sida Wang and
                  Armando Solar{-}Lezama and
                  Koushik Sen and
                  Ion Stoica},
  title        = {LiveCodeBench: Holistic and Contamination Free Evaluation of Large
                  Language Models for Code},
  booktitle    = {The Thirteenth International Conference on Learning Representations,
                  {ICLR} 2025, Singapore, April 24-28, 2025},
  publisher    = {OpenReview.net},
  year         = {2025},
  url          = {https://openreview.net/forum?id=chfJJYC3iL},
  bibsource    = {dblp computer science bibliography, https://dblp.org}
}

@article{zhou2023instruction,
  author       = {Jeffrey Zhou and
                  Tianjian Lu and
                  Swaroop Mishra and
                  Siddhartha Brahma and
                  Sujoy Basu and
                  Yi Luan and
                  Denny Zhou and
                  Le Hou},
  title        = {Instruction-Following Evaluation for Large Language Models},
  journal      = {CoRR},
  volume       = {abs/2311.07911},
  year         = {2023},
  url          = {https://doi.org/10.48550/arXiv.2311.07911},
  doi          = {10.48550/ARXIV.2311.07911},
  eprinttype   = {arXiv},
  eprint       = {2311.07911},
  bibsource    = {dblp computer science bibliography, https://dblp.org}
}

@article{pyatkin2025ifbench,
  author       = {Valentina Pyatkin and
                  Saumya Malik and
                  Victoria Graf and
                  Hamish Ivison and
                  Shengyi Huang and
                  Pradeep Dasigi and
                  Nathan Lambert and
                  Hanna Hajishirzi},
  editor       = {Danielle Belgrave and
                  Cheng Zhang and
                  Laura N. Montoya and
                  Hsuan{-}Tien Lin and
                  Razvan Pascanu and
                  Piotr Koniusz and
                  Marzyeh Ghassemi and
                  Nancy Chen and
                  Iv{\'{a}}n Vladimir Meza Ru{\'{\i}}z and
                  Arturo Loaiza{-}Bonilla},
  title        = {Generalizing Verifiable Instruction Following},
  booktitle    = {Advances in Neural Information Processing Systems 38: Annual Conference
                  on Neural Information Processing Systems 2025, NeurIPS 2025, San Diego,
                  CA, USA, December 2-7, 2025 / Mexico City, Mexico, November 30 - December
                  5, 2025},
  year         = {2025},
  url          = {http://papers.nips.cc/paper\_files/paper/2025/hash/46499a0622ecf568b72d17b61e45dbd5-Abstract-Datasets\_and\_Benchmarks\_Track.html},
  bibsource    = {dblp computer science bibliography, https://dblp.org}
}

@article{qwen3_2025,
  author       = {Qwen Team},
  title        = {Qwen3 Technical Report},
  journal      = {CoRR},
  volume       = {abs/2505.09388},
  year         = {2025},
  url          = {https://doi.org/10.48550/arXiv.2505.09388},
  doi          = {10.48550/ARXIV.2505.09388},
  eprinttype   = {arXiv},
  eprint       = {2505.09388},
  bibsource    = {dblp computer science bibliography, https://dblp.org}
}

@misc{smollm3_2025,
  title={{SmolLM3: smol, multilingual, long-context reasoner}},
  author={Bakouch, Elie and Ben Allal, Loubna and Lozhkov, Anton and Tazi, Nouamane and Tunstall, Lewis and Patiño, Carlos Miguel and Beeching, Edward and Roucher, Aymeric and Reedi, Aksel Joonas and Gallouédec, Quentin and Rasul, Kashif and Habib, Nathan and Fourrier, Clémentine and Kydlicek, Hynek and Penedo, Guilherme and Larcher, Hugo and Morlon, Mathieu and Srivastav, Vaibhav and Lochner, Joshua and Nguyen, Xuan-Son and Raffel, Colin and von Werra, Leandro and Wolf, Thomas},
  year={2025},
  howpublished={\url{https://huggingface.co/blog/smollm3}}
}

@article{shao2024deepseekmath,
  author       = {Zhihong Shao and
                  Peiyi Wang and
                  Qihao Zhu and
                  Runxin Xu and
                  Junxiao Song and
                  Mingchuan Zhang and
                  Y. K. Li and
                  Y. Wu and
                  Daya Guo},
  title        = {DeepSeekMath: Pushing the Limits of Mathematical Reasoning in Open
                  Language Models},
  journal      = {CoRR},
  volume       = {abs/2402.03300},
  year         = {2024},
  url          = {https://doi.org/10.48550/arXiv.2402.03300},
  doi          = {10.48550/ARXIV.2402.03300},
  eprinttype   = {arXiv},
  eprint       = {2402.03300},
  bibsource    = {dblp computer science bibliography, https://dblp.org}
}

@article{tang2024merging,
  author       = {Enneng Yang and
                  Li Shen and
                  Guibing Guo and
                  Xing{-}Wei Wang and
                  Xiaochun Cao and
                  Jie Zhang and
                  Dacheng Tao},
  title        = {Model Merging in LLMs, MLLMs, and Beyond: Methods, Theories, Applications,
                  and Opportunities},
  journal      = {{ACM} Comput. Surv.},
  volume       = {58},
  number       = {8},
  pages        = {216:1--216:41},
  year         = {2026},
  url          = {https://doi.org/10.1145/3787849},
  doi          = {10.1145/3787849},
  bibsource    = {dblp computer science bibliography, https://dblp.org}
}

@inproceedings{yang2024adamerging,
  author       = {Enneng Yang and
                  Zhenyi Wang and
                  Li Shen and
                  Shiwei Liu and
                  Guibing Guo and
                  Xingwei Wang and
                  Dacheng Tao},
  title        = {AdaMerging: Adaptive Model Merging for Multi-Task Learning},
  booktitle    = {The Twelfth International Conference on Learning Representations,
                  {ICLR} 2024, Vienna, Austria, May 7-11, 2024},
  publisher    = {OpenReview.net},
  year         = {2024},
  url          = {https://openreview.net/forum?id=nZP6NgD3QY},
  bibsource    = {dblp computer science bibliography, https://dblp.org}
}

@inproceedings{ilharco2022patching,
  author       = {Gabriel Ilharco and
                  Mitchell Wortsman and
                  Samir Yitzhak Gadre and
                  Shuran Song and
                  Hannaneh Hajishirzi and
                  Simon Kornblith and
                  Ali Farhadi and
                  Ludwig Schmidt},
  editor       = {Sanmi Koyejo and
                  S. Mohamed and
                  A. Agarwal and
                  Danielle Belgrave and
                  K. Cho and
                  A. Oh},
  title        = {Patching open-vocabulary models by interpolating weights},
  booktitle    = {Advances in Neural Information Processing Systems 35: Annual Conference
                  on Neural Information Processing Systems 2022, NeurIPS 2022, New Orleans,
                  LA, USA, November 28 - December 9, 2022},
  year         = {2022},
  url          = {http://papers.nips.cc/paper\_files/paper/2022/hash/bc6cddcd5d325e1c0f826066c1ad0215-Abstract-Conference.html},
  bibsource    = {dblp computer science bibliography, https://dblp.org}
}

@article{hinton2015distilling,
  author       = {Geoffrey E. Hinton and
                  Oriol Vinyals and
                  Jeffrey Dean},
  title        = {Distilling the Knowledge in a Neural Network},
  journal      = {CoRR},
  volume       = {abs/1503.02531},
  year         = {2015},
  url          = {http://arxiv.org/abs/1503.02531},
  eprinttype   = {arXiv},
  eprint       = {1503.02531},
  bibsource    = {dblp computer science bibliography, https://dblp.org}
}

@inproceedings{kim2016sequence,
  author       = {Yoon Kim and
                  Alexander M. Rush},
  editor       = {Jian Su and
                  Xavier Carreras and
                  Kevin Duh},
  title        = {Sequence-Level Knowledge Distillation},
  booktitle    = {Proceedings of the 2016 Conference on Empirical Methods in Natural
                  Language Processing, {EMNLP} 2016, Austin, Texas, USA, November 1-4,
                  2016},
  pages        = {1317--1327},
  publisher    = {The Association for Computational Linguistics},
  year         = {2016},
  url          = {https://doi.org/10.18653/v1/d16-1139},
  doi          = {10.18653/V1/D16-1139},
  bibsource    = {dblp computer science bibliography, https://dblp.org}
}

@inproceedings{you2017learning,
  author       = {Shan You and
                  Chang Xu and
                  Chao Xu and
                  Dacheng Tao},
  title        = {Learning from Multiple Teacher Networks},
  booktitle    = {Proceedings of the 23rd {ACM} {SIGKDD} International Conference on
                  Knowledge Discovery and Data Mining, Halifax, NS, Canada, August 13
                  - 17, 2017},
  pages        = {1285--1294},
  publisher    = {{ACM}},
  year         = {2017},
  url          = {https://doi.org/10.1145/3097983.3098135},
  doi          = {10.1145/3097983.3098135},
  bibsource    = {dblp computer science bibliography, https://dblp.org}
}

@inproceedings{gudibande2024false,
  author       = {Arnav Gudibande and
                  Eric Wallace and
                  Charlie Snell and
                  Xinyang Geng and
                  Hao Liu and
                  Pieter Abbeel and
                  Sergey Levine and
                  Dawn Song},
  title        = {The False Promise of Imitating Proprietary LLMs},
  journal      = {CoRR},
  volume       = {abs/2305.15717},
  year         = {2023},
  url          = {https://doi.org/10.48550/arXiv.2305.15717},
  doi          = {10.48550/ARXIV.2305.15717},
  eprinttype   = {arXiv},
  eprint       = {2305.15717},
  bibsource    = {dblp computer science bibliography, https://dblp.org}
}

@inproceedings{lu2025twin,
  author       = {Zhenyi Lu and
                  Chenghao Fan and
                  Wei Wei and
                  Xiaoye Qu and
                  Dangyang Chen and
                  Yu Cheng},
  editor       = {Amir Globersons and
                  Lester Mackey and
                  Danielle Belgrave and
                  Angela Fan and
                  Ulrich Paquet and
                  Jakub M. Tomczak and
                  Cheng Zhang},
  title        = {Twin-Merging: Dynamic Integration of Modular Expertise in Model Merging},
  booktitle    = {Advances in Neural Information Processing Systems 37: Annual Conference
                  on Neural Information Processing Systems 2024, NeurIPS 2024, Vancouver,
                  BC, Canada, December 10 - 15, 2024},
  year         = {2024},
  url          = {http://papers.nips.cc/paper\_files/paper/2024/hash/8fcd17eb91bae20d9826786d7d6be799-Abstract-Conference.html},
  bibsource    = {dblp computer science bibliography, https://dblp.org}
}

@inproceedings{jiang2025breadcrumbs,
  author       = {MohammadReza Davari and
                  Eugene Belilovsky},
  editor       = {Ales Leonardis and
                  Elisa Ricci and
                  Stefan Roth and
                  Olga Russakovsky and
                  Torsten Sattler and
                  G{\"{u}}l Varol},
  title        = {Model Breadcrumbs: Scaling Multi-task Model Merging with Sparse Masks},
  booktitle    = {Computer Vision - {ECCV} 2024 - 18th European Conference, Milan, Italy,
                  September 29-October 4, 2024, Proceedings, Part {LXXV}},
  series       = {Lecture Notes in Computer Science},
  volume       = {15133},
  pages        = {270--287},
  publisher    = {Springer},
  year         = {2024},
  url          = {https://doi.org/10.1007/978-3-031-73226-3\_16},
  doi          = {10.1007/978-3-031-73226-3\_16},
  bibsource    = {dblp computer science bibliography, https://dblp.org}
}

@article{he2025localfisher,
  author       = {Aniruddha Roy and
                  Jyoti Patel and
                  Aman Chadha and
                  Vinija Jain and
                  Amitava Das},
  title        = {AlignMerge - Alignment-Preserving Large Language Model Merging via
                  Fisher-Guided Geometric Constraints},
  journal      = {CoRR},
  volume       = {abs/2512.16245},
  year         = {2025},
  url          = {https://doi.org/10.48550/arXiv.2512.16245},
  doi          = {10.48550/ARXIV.2512.16245},
  eprinttype   = {arXiv},
  eprint       = {2512.16245},
  bibsource    = {dblp computer science bibliography, https://dblp.org}
}

@article{deepseek2025r1,
  author       = {DeepSeek{-}AI},
  title        = {DeepSeek-R1: Incentivizing Reasoning Capability in LLMs via Reinforcement
                  Learning},
  journal      = {CoRR},
  volume       = {abs/2501.12948},
  year         = {2025},
  url          = {https://doi.org/10.48550/arXiv.2501.12948},
  doi          = {10.48550/ARXIV.2501.12948},
  eprinttype   = {arXiv},
  eprint       = {2501.12948},
  bibsource    = {dblp computer science bibliography, https://dblp.org}
}

@article{moonshot2025kimi,
  author       = {Kimi Team},
  title        = {Kimi k1.5: Scaling Reinforcement Learning with LLMs},
  journal      = {CoRR},
  volume       = {abs/2501.12599},
  year         = {2025},
  url          = {https://doi.org/10.48550/arXiv.2501.12599},
  doi          = {10.48550/ARXIV.2501.12599},
  eprinttype   = {arXiv},
  eprint       = {2501.12599},
  bibsource    = {dblp computer science bibliography, https://dblp.org}
}

@article{chen2025rdistill,
  author       = {Xueqi Cheng and
                  Xugui Zhou and
                  Tyler Derr and
                  Yushun Dong},
  title        = {ReAD: Reinforcement-Guided Capability Distillation for Large Language
                  Models},
  journal      = {CoRR},
  volume       = {abs/2605.11290},
  year         = {2026},
  url          = {https://doi.org/10.48550/arXiv.2605.11290},
  doi          = {10.48550/ARXIV.2605.11290},
  eprinttype   = {arXiv},
  eprint       = {2605.11290},
  bibsource    = {dblp computer science bibliography, https://dblp.org}
}

@article{zhang2025steppolicy,
  author       = {Qiyong Zhong and
                  Mao Zheng and
                  Mingyang Song and
                  Xin Lin and
                  Jie Sun and
                  Houcheng Jiang and
                  Xiang Wang and
                  Junfeng Fang},
  title        = {{SOD:} Step-wise On-policy Distillation for Small Language Model Agents},
  journal      = {CoRR},
  volume       = {abs/2605.07725},
  year         = {2026},
  url          = {https://doi.org/10.48550/arXiv.2605.07725},
  doi          = {10.48550/ARXIV.2605.07725},
  eprinttype   = {arXiv},
  eprint       = {2605.07725},
  bibsource    = {dblp computer science bibliography, https://dblp.org}
}

@article{li2025policyfiltration,
  author       = {Tianlei Chen and
                  Jiao Ou and
                  Ziyuan Liu and
                  Ruiming Tang and
                  Jian Liang and
                  Han Li},
  title        = {Counteraction-Aware Multi-Teacher On-Policy Distillation for General
                  Capability Recovery with Domain Preservation},
  journal      = {CoRR},
  volume       = {abs/2605.27115},
  year         = {2026},
  url          = {https://doi.org/10.48550/arXiv.2605.27115},
  doi          = {10.48550/ARXIV.2605.27115},
  eprinttype   = {arXiv},
  eprint       = {2605.27115},
  bibsource    = {dblp computer science bibliography, https://dblp.org}
}

@article{liu2025efficient,
  author       = {Kaiyuan Liu and
                  Chen Shen and
                  Zhanwei Zhang and
                  Junjie Liu and
                  Xiaosong Yuan and
                  Jieping Ye},
  title        = {Efficient Reasoning Through Suppression of Self-Affirmation Reflections
                  in Large Reasoning Models},
  journal      = {CoRR},
  volume       = {abs/2506.12353},
  year         = {2025},
  url          = {https://doi.org/10.48550/arXiv.2506.12353},
  doi          = {10.48550/ARXIV.2506.12353},
  eprinttype   = {arXiv},
  eprint       = {2506.12353},
  bibsource    = {dblp computer science bibliography, https://dblp.org}
}

@article{yan2026jieping,
  title={Jieping Ye. Distribution-aligned sequence distillation for superior long-cot reasoning},
  author={Yan, Shaotian and Liu, Kaiyuan and Shen, Chen and Wang, Bing and Fan, Sinan and Zhang, Jun and Wu, Yue and Wang, Zheng},
  journal={arXiv preprint arXiv:2601.09088},
  year={2026}
}

@article{liu2026prefix,
  title={Prefix teach, suffix fade: Local teachability collapse in strong-to-weak on-policy distillation},
  author={Liu, Kaiyuan and Zhuang, Ziyuan and Bai, Yang and Wang, Bing and Weng, Rongxiang and Ye, Jieping},
  journal={arXiv preprint arXiv:2605.13643},
  year={2026}
}

@article{liu2025did,
  title={Where did this sentence come from? tracing provenance in LLM reasoning distillation},
  author={Liu, Kaiyuan and Yan, Shaotian and Miao, Rui and Wang, Bing and Shen, Chen and Zhang, Jun and Ye, Jieping},
  journal={arXiv preprint arXiv:2512.20908},
  year={2025}
}

@article{bai2026multi,
  title={Multi-Objective and Mixed-Reward Reinforcement Learning via Reward-Decorrelated Policy Optimization},
  author={Bai, Yang and Liu, Kaiyuan and Zhuang, Ziyuan and Zhou, Jiahong and Weng, Rongxiang and Chen, Xin and Wang, Jingang and Cai, Xunliang},
  journal={arXiv preprint arXiv:2605.13641},
  year={2026}
}
\bibliographystyle{iclr2027_conference}

\clearpage
\appendix

\noindent This appendix is organized as follows.
Appendix~\ref{app:proofs} proves Theorem~\ref{thm:mopd_fisher} and Proposition~\ref{prop:homomorphism}, derives the residual, its scale-normalized spectrum, and Minimax convexity, and bounds the linearization error of shrinkage re-merging.
Appendix~\ref{app:diagnostics} reports the six-benchmark Phase~1 breakdown, the six-benchmark scores of the normalized-objective merges, the empirical reward correlation, the $K=2$ pairwise merges, generation lengths, and the full G-OPD suite.
Appendix~\ref{app:extra_exp} details the joint-distillation recipes, the post-hoc $\gamma$ write-backs, the $\epsilon$ allocation cuts, training infrastructure, the cost of assembling $M_d$, and the Open-MOPD RGBM Phase~2 specification.
Appendix~\ref{app:limitations} states the scope of the linear-basin guarantee, the specialist count, and the source of the Open-MOPD margin.

\section{Mathematical Proofs and Derivations}
\label{app:proofs}

This section records the proofs deferred from Section~\ref{sec:theory}: the second-order equivalence of multi-teacher distillation to Fisher merging (Appendix~\ref{app:proof_thm1}), the first-order reward homomorphism (Appendix~\ref{app:proof_prop2}), the second-moment residual, its scale-normalized spectrum, and uniqueness of the Minimax QP (Appendix~\ref{app:proof_residuals}), and the shrinkage bound on the linearization error of a re-merged branch (Appendix~\ref{app:proof_remerge_basin}).

\subsection{Proof of Theorem~\ref{thm:mopd_fisher} (Second-Order Equivalence to Fisher Merging)}
\label{app:proof_thm1}

Let $\mathcal{X}_d$ denote the prompt space for domain $d$, and $\mathcal{Y}$ denote the token output space. For a given prompt $x \sim \mathcal{D}_d$, the reverse Kullback-Leibler (KL) divergence between student policy $\pi_\theta(\cdot \mid x)$ and teacher policy $\pi_{W_{\phi_d}}(\cdot \mid x)$ is:
\begin{equation}
D_{\mathrm{KL}}(\pi_\theta(\cdot \mid x) \,\|\, \pi_{W_{\phi_d}}(\cdot \mid x)) = \sum_{y \in \mathcal{Y}} \pi_\theta(y \mid x) \log \frac{\pi_\theta(y \mid x)}{\pi_{W_{\phi_d}}(y \mid x)}.
\end{equation}
We define the expected domain divergence function $f_d: \mathbb{R}^P \to \mathbb{R}$ as:
\begin{equation}
f_d(\theta) := \mathbb{E}_{x \sim \mathcal{D}_d} \left[ D_{\mathrm{KL}}(\pi_\theta(\cdot \mid x) \,\|\, \pi_{W_{\phi_d}}(\cdot \mid x)) \right].
\end{equation}
We perform a multivariate Taylor expansion of $f_d(\theta)$ around the expert weight point $\theta = W_{\phi_d}$:
\begin{equation}
f_d(\theta) = f_d(W_{\phi_d}) + \nabla_\theta f_d(W_{\phi_d})^\top (\theta - W_{\phi_d}) + \frac{1}{2} (\theta - W_{\phi_d})^\top \nabla^2_\theta f_d(W_{\phi_d}) (\theta - W_{\phi_d}) + \mathcal{O}(\|\theta - W_{\phi_d}\|^3).
\label{eq:taylor_expansion_app}
\end{equation}

We evaluate each term:
\begin{enumerate}
    \item \textbf{Zeroth-order term:} At $\theta = W_{\phi_d}$, the distributions are identical:
    \begin{equation}
    f_d(W_{\phi_d}) = \mathbb{E}_{x \sim \mathcal{D}_d} \left[ \sum_{y} \pi_{W_{\phi_d}}(y \mid x) \log \frac{\pi_{W_{\phi_d}}(y \mid x)}{\pi_{W_{\phi_d}}(y \mid x)} \right] = 0.
    \end{equation}
    \item \textbf{First-order gradient term:} Taking the gradient of the reverse KL with respect to $\theta$:
    \begin{align}
    \nabla_\theta D_{\mathrm{KL}}(\pi_\theta(\cdot \mid x) \,\|\, \pi_{W_{\phi_d}}(\cdot \mid x)) &= \nabla_\theta \sum_y \pi_\theta(y \mid x) \log \pi_\theta(y \mid x) \nonumber \\
    &\quad - \nabla_\theta \sum_y \pi_\theta(y \mid x) \log \pi_{W_{\phi_d}}(y \mid x) \nonumber \\
    &= \sum_y \nabla_\theta \pi_\theta(y \mid x) \left( \log \pi_\theta(y \mid x) + 1 - \log \pi_{W_{\phi_d}}(y \mid x) \right) \nonumber \\
    &= \sum_y \nabla_\theta \pi_\theta(y \mid x) \log \frac{\pi_\theta(y \mid x)}{\pi_{W_{\phi_d}}(y \mid x)} + \sum_y \nabla_\theta \pi_\theta(y \mid x).
    \end{align}
    Evaluating at $\theta = W_{\phi_d}$: the log ratio $\log(\pi_{W_{\phi_d}} / \pi_{W_{\phi_d}}) = \log 1 = 0$. Furthermore, because $\sum_y \pi_\theta(y \mid x) = 1$ for all $\theta$, the sum of gradients vanishes: $\sum_y \nabla_\theta \pi_\theta(y \mid x) = \nabla_\theta \sum_y \pi_\theta(y \mid x) = \nabla_\theta 1 = \mathbf{0}$. Therefore:
    \begin{equation}
    \nabla_\theta f_d(W_{\phi_d}) = \mathbf{0}.
    \end{equation}
    \item \textbf{Second-order Hessian term:} Differentiating the gradient expression again with respect to $\theta$:
    \begin{align}
    \nabla^2_\theta D_{\mathrm{KL}}(\pi_\theta(\cdot \mid x) \,\|\, \pi_{W_{\phi_d}}(\cdot \mid x)) &= \sum_y \nabla^2_\theta \pi_\theta(y \mid x) \left( \log \frac{\pi_\theta(y \mid x)}{\pi_{W_{\phi_d}}(y \mid x)} + 1 \right) \nonumber \\
    &\quad + \sum_y \nabla_\theta \pi_\theta(y \mid x) \nabla_\theta \log \pi_\theta(y \mid x)^\top.
    \end{align}
    Evaluating at $\theta = W_{\phi_d}$: the log ratio vanishes, leaving $\sum_y \nabla^2_\theta \pi_{W_{\phi_d}}(y \mid x) = \nabla^2_\theta (\sum_y \pi_{W_{\phi_d}}(y \mid x)) = \mathbf{0}$. Using the identity $\nabla_\theta \pi_\theta = \pi_\theta \nabla_\theta \log \pi_\theta$, the remaining term becomes:
    \begin{align}
    \nabla^2_\theta f_d(W_{\phi_d}) &= \mathbb{E}_{x \sim \mathcal{D}_d} \left[ \sum_{y} \pi_{W_{\phi_d}}(y \mid x) \nabla_\theta \log \pi_{W_{\phi_d}}(y \mid x) \nabla_\theta \log \pi_{W_{\phi_d}}(y \mid x)^\top \right] \nonumber \\
    &= \mathbb{E}_{x \sim \mathcal{D}_d, y \sim \pi_{W_{\phi_d}}(\cdot \mid x)} \left[ \nabla_\theta \log \pi_{W_{\phi_d}}(y \mid x) \nabla_\theta \log \pi_{W_{\phi_d}}(y \mid x)^\top \right] =: F_d,
    \end{align}
    which is precisely the empirical Fisher Information Matrix $F_d \in \mathbb{R}^{P \times P}$ of expert policy $\pi_{W_{\phi_d}}$.
\end{enumerate}

Substituting these three results back into the multi-teacher objective $J_{\mathrm{MOPD}}(\theta) = \sum_{d=1}^K p_d f_d(\theta)$ yields:
\begin{equation}
J_{\mathrm{MOPD}}(\theta) = \frac{1}{2} \sum_{d=1}^K p_d (\theta - W_{\phi_d})^\top F_d (\theta - W_{\phi_d}) + \mathcal{O}\left( \max_{d} \|\theta - W_{\phi_d}\|^3 \right).
\end{equation}
Taking the gradient with respect to $\theta$ and setting it to zero:
\begin{equation}
\nabla_\theta J_{\mathrm{MOPD}}(\theta) = \sum_{d=1}^K p_d F_d (\theta - W_{\phi_d}) = \mathbf{0} \implies \left( \sum_{d=1}^K p_d F_d \right) \theta = \sum_{d=1}^K p_d F_d W_{\phi_d}.
\end{equation}
Assuming $\sum_{d=1}^K p_d F_d$ is non-singular, we obtain the global unconstrained Fisher merge:
\begin{equation}
\theta_{\mathrm{Fisher}}^\star = \left( \sum_{d=1}^K p_d F_d \right)^{-1} \sum_{d=1}^K p_d F_d W_{\phi_d}.
\end{equation}
This completes the proof of Theorem~\ref{thm:mopd_fisher}. \qed

\subsection{Proof of Proposition~\ref{prop:homomorphism} (First-Order Homomorphism)}
\label{app:proof_prop2}

Let $f(x; \theta) \in \mathbb{R}^{|V|}$ denote the logit vector at a generation step under prompt-prefix $x$. The token distribution is given by the softmax function:
\begin{equation}
\pi_\theta(y \mid x) = \frac{\exp(f(x; \theta)[y])}{\sum_{y' \in V} \exp(f(x; \theta)[y'])} = \frac{\exp(f(x; \theta)[y])}{Z(x; \theta)}.
\end{equation}
Taking the logarithm gives $\log \pi_\theta(y \mid x) = f(x; \theta)[y] - \log Z(x; \theta)$. Write $\Delta\log Z_d(x):=\log Z(x;W_{\phi_d})-\log Z(x;W_{\mathrm{ref}})$ and $\Delta\log Z_{\boldsymbol{\alpha}}(x):=\log Z(x;\theta(\boldsymbol{\alpha}))-\log Z(x;W_{\mathrm{ref}})$.

Let $\theta = W_{\mathrm{ref}} + \tau$. A first-order Taylor expansion of $f(x; \theta)[y]$ around $W_{\mathrm{ref}}$ yields:
\begin{equation}
f(x; W_{\mathrm{ref}} + \tau)[y] = f(x; W_{\mathrm{ref}})[y] + \nabla_\theta f(x; W_{\mathrm{ref}})[y]^\top \tau + \mathcal{O}(\|\tau\|^2).
\end{equation}
Applying this to specialist weights $W_{\phi_d} = W_{\mathrm{ref}} + \tau_d$, the implicit reward is:
\begin{equation}
r_d(x, y) = \log \frac{\pi_{W_{\phi_d}}(y \mid x)}{\pi_{W_{\mathrm{ref}}}(y \mid x)} = \nabla_\theta f(x; W_{\mathrm{ref}})[y]^\top \tau_d - \Delta \log Z_d(x) + \mathcal{O}(\|\tau_d\|^2).
\label{eq:rd_first_order_app}
\end{equation}
The log-partition function $\log Z(x;\theta) = \log \sum_{y' \in V} \exp(f(x;\theta)[y'])$ is a smooth function of $\theta$, so the same first-order expansion around $W_{\mathrm{ref}}$ gives
\begin{equation}
\log Z(x; W_{\mathrm{ref}} + \tau) - \log Z(x; W_{\mathrm{ref}}) = \nabla_\theta \log Z(x; W_{\mathrm{ref}})^\top \tau + \mathcal{O}(\|\tau\|^2).
\label{eq:logZ_first_order_app}
\end{equation}
Taking $\tau = \tau_d$ gives $\Delta\log Z_d(x) = \nabla_\theta \log Z(x; W_{\mathrm{ref}})^\top \tau_d + \mathcal{O}(\|\tau_d\|^2)$, and taking $\tau = \mathcal{T}\boldsymbol{\alpha} = \sum_d \alpha_d \tau_d$ gives, by linearity of the inner product,
\begin{align}
\Delta\log Z_{\boldsymbol{\alpha}}(x) &= \sum_{d=1}^K \alpha_d \, \nabla_\theta \log Z(x; W_{\mathrm{ref}})^\top \tau_d + \mathcal{O}(\|\mathcal{T}\boldsymbol{\alpha}\|^2) \nonumber \\
&= \sum_{d=1}^K \alpha_d \, \Delta\log Z_d(x) + \mathcal{O}\Big(\|\mathcal{T}\boldsymbol{\alpha}\|^2 + \sum_{d} |\alpha_d| \|\tau_d\|^2\Big).
\label{eq:logZ_superpose_app}
\end{align}
Now, for any linear combination $\theta(\boldsymbol{\alpha}) = W_{\mathrm{ref}} + \sum_{d=1}^K \alpha_d \tau_d$, by linearity of the gradient operator and Eq.~\ref{eq:logZ_superpose_app}:
\begin{align}
\log \frac{\pi_{\theta(\boldsymbol{\alpha})}(y \mid x)}{\pi_{W_{\mathrm{ref}}}(y \mid x)} &= \nabla_\theta f(x; W_{\mathrm{ref}})[y]^\top \left( \sum_{d=1}^K \alpha_d \tau_d \right) - \Delta \log Z_{\boldsymbol{\alpha}}(x) + \mathcal{O}\left( \left\| \sum_d \alpha_d \tau_d \right\|^2 \right) \nonumber \\
&= \sum_{d=1}^K \alpha_d \Big( \nabla_\theta f(x; W_{\mathrm{ref}})[y]^\top \tau_d - \Delta \log Z_d(x) \Big) + \mathcal{O}\Big(\|\mathcal{T}\boldsymbol{\alpha}\|^2 + \sum_{d} |\alpha_d| \|\tau_d\|^2\Big) \nonumber \\
&= \sum_{d=1}^K \alpha_d r_d(x, y) + \mathcal{O}\Big(\|\mathcal{T}\boldsymbol{\alpha}\|^2 + \sum_{d} |\alpha_d| \|\tau_d\|^2\Big) \approx \boldsymbol{\alpha}^\top \mathbf{r}(x, y),
\end{align}
where the last equality substitutes Eq.~\ref{eq:rd_first_order_app} for each $r_d$.
This proves the First-Order Homomorphism between weight combinations and reward superposition. \qed

\subsection{Derivation of Second-Moment Residuals and Minimax Convexity}
\label{app:proof_residuals}

\paragraph{Connection to the Projected Fisher Information Metric.}
Under the first-order logit expansion (Proposition~\ref{prop:homomorphism}), for any specialist $d$ and prompt-prefix $x$:
\begin{equation}
\mathcal{T}^\top \nabla_\theta \log \pi_{W_{\phi_d}}(y \mid x) = \mathbf{r}(x, y) - \boldsymbol{\mu}_d(x) + \mathcal{O}(\max_k \|\tau_k\|^2),
\end{equation}
where $\boldsymbol{\mu}_d(x) := \mathbb{E}_{y' \sim \pi_{W_{\phi_d}}(\cdot \mid x)}[\mathbf{r}(x, y')]$. The projected Fisher Information Matrix $S_d := \mathcal{T}^\top F_d \mathcal{T} \in \mathbb{R}^{K \times K}$ satisfies:
\begin{align}
S_d &= \mathbb{E}_{x \sim \mathcal{D}_d, y \sim \pi_{W_{\phi_d}}} \left[ \big(\mathcal{T}^\top \nabla_\theta \log \pi_{W_{\phi_d}}(y \mid x)\big) \big(\mathcal{T}^\top \nabla_\theta \log \pi_{W_{\phi_d}}(y \mid x)\big)^\top \right] \nonumber \\
&\approx \mathbb{E}_{x \sim \mathcal{D}_d, y \sim \pi_{W_{\phi_d}}} \left[ (\mathbf{r}(x, y) - \boldsymbol{\mu}_d(x)) (\mathbf{r}(x, y) - \boldsymbol{\mu}_d(x))^\top \right] = \mathrm{Cov}_{\pi_{W_{\phi_d}}}(\mathbf{r}).
\end{align}
Now, expanding the second-moment matrix $M_d := \mathbb{E}_{x \sim \mathcal{D}_d, y \sim \pi_{W_{\phi_d}}} [\mathbf{r}(x, y) \mathbf{r}(x, y)^\top]$:
\begin{equation}
M_d = \mathrm{Cov}_{\pi_{W_{\phi_d}}}(\mathbf{r}) + \mathbb{E}_{x \sim \mathcal{D}_d} [\boldsymbol{\mu}_d(x) \boldsymbol{\mu}_d(x)^\top] \approx S_d + \mathbb{E}_{x \sim \mathcal{D}_d} [\boldsymbol{\mu}_d(x) \boldsymbol{\mu}_d(x)^\top].
\end{equation}
For any weight combination vector $\boldsymbol{\alpha} \in \mathbb{R}^K$, writing $\mathbf{e}_d\in\mathbb{R}^K$ for the $d$-th standard basis vector, the residual optimized in Phase~1 is
\begin{equation}
\mathcal{L}_d(\boldsymbol{\alpha})
:= (\boldsymbol{\alpha} - \mathbf{e}_d)^\top M_d (\boldsymbol{\alpha} - \mathbf{e}_d)
\approx (\boldsymbol{\alpha} - \mathbf{e}_d)^\top S_d (\boldsymbol{\alpha} - \mathbf{e}_d)
+ \mathbb{E}_{x \sim \mathcal{D}_d} \big[ \big( (\boldsymbol{\alpha} - \mathbf{e}_d)^\top \boldsymbol{\mu}_d(x) \big)^2 \big].
\end{equation}
$\mathcal{L}_d$ upper-bounds the projected Fisher term $2 J_{\mathrm{sub}, d}(\boldsymbol{\alpha}) = (\boldsymbol{\alpha} - \mathbf{e}_d)^\top S_d (\boldsymbol{\alpha} - \mathbf{e}_d)$. After solving the Minimax problem, we form the normalized residual $\overline{\mathcal{L}}_d(\boldsymbol{\alpha}) := \mathcal{L}_d(\boldsymbol{\alpha})/\mathbb{E}[r_d^2]$. This post-hoc normalization expresses the remainder as a fraction of domain-$d$ reward energy and makes the spectrum comparable across domains.

\paragraph{Term-wise Decomposition.} Writing out the quadratic with $(M_d)_{jk} = \mathbb{E}[r_j r_k]$ and then dividing by $\mathbb{E}[r_d^2]$ yields the normalized diagnostic
\begin{equation}
\overline{\mathcal{L}}_d(\boldsymbol{\alpha}) = (\alpha_d - 1)^2 + \sum_{j \neq d} \alpha_j^2 \, \frac{\mathbb{E}[r_j^2]}{\mathbb{E}[r_d^2]} + 2 (\alpha_d - 1) \sum_{j \neq d} \alpha_j \, \frac{\mathbb{E}[r_d r_j]}{\mathbb{E}[r_d^2]} + 2 \sum_{\substack{j < k \\ j, k \neq d}} \alpha_j \alpha_k \, \frac{\mathbb{E}[r_j r_k]}{\mathbb{E}[r_d^2]}.
\label{eq:residual_tradeoff}
\end{equation}
The first term is squared relative under-recovery of the in-domain reward (it vanishes at $\alpha_d = 1$). The remaining terms are cross-domain interference measured in the same units. The correlation structure of Appendix~\ref{app:reward_corr} governs the cross-terms.

\begin{proposition}[Convexity of the Minimax QP]
The objective function $\Phi(\boldsymbol{\alpha}) := \max_{d \in \{1, \dots, K\}} \mathcal{L}_d(\boldsymbol{\alpha})$ is strictly convex in $\boldsymbol{\alpha}$ whenever each $M_d \succ 0$.
\end{proposition}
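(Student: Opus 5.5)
The plan is to reduce the claim to two standard facts: each $\mathcal{L}_d$ is a strictly convex quadratic, and a pointwise maximum of finitely many strictly convex functions is strictly convex.

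\textbf{Step 1: each $\mathcal{L}_d$ is strictly convex.} The function $\mathcal{L}_d(\boldsymbol{\alpha}) = (\boldsymbol{\alpha}-\mathbf{e}_d)^\top M_d(\boldsymbol{\alpha}-\mathbf{e}_d)$ is a quadratic with constant Hessian $2M_d$. Since $M_d \succ 0$, this Hessian is positive definite, so $\mathcal{L}_d$ is strictly convex; in fact it is $2\lambda_{\min}(M_d)$-strongly convex. I would verify this directly from the quadratic identity: for $\boldsymbol{\alpha}\neq\boldsymbol{\beta}$ and $t\in(0,1)$,
\begin{equation*}
t\,\mathcal{L}_d(\boldsymbol{\alpha})+(1-t)\,\mathcal{L}_d(\boldsymbol{\beta})-\mathcal{L}_d\big(t\boldsymbol{\alpha}+(1-t)\boldsymbol{\beta}\big) = t(1-t)\,(\boldsymbol{\alpha}-\boldsymbol{\beta})^\top M_d(\boldsymbol{\alpha}-\boldsymbol{\beta}) > 0 .
\end{equation*}
This identity follows by expanding both sides with $\mathbf{u}=\boldsymbol{\alpha}-\mathbf{e}_d$ and $\mathbf{v}=\boldsymbol{\beta}-\mathbf{e}_d$, since the shift by $\mathbf{e}_d$ cancels.

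\textbf{Step 2: the maximum is strictly convex.} Set $\boldsymbol{\gamma}=t\boldsymbol{\alpha}+(1-t)\boldsymbol{\beta}$ and choose $d^\star$ attaining $\Phi(\boldsymbol{\gamma})=\mathcal{L}_{d^\star}(\boldsymbol{\gamma})$. Step 1 then gives
\begin{equation*}
\Phi(\boldsymbol{\gamma}) = \mathcal{L}_{d^\star}(\boldsymbol{\gamma}) < t\,\mathcal{L}_{d^\star}(\boldsymbol{\alpha})+(1-t)\,\mathcal{L}_{d^\star}(\boldsymbol{\beta}) \le t\,\Phi(\boldsymbol{\alpha})+(1-t)\,\Phi(\boldsymbol{\beta}).
\end{equation*}
Hence $\Phi$ is strictly convex. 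The same argument with the identity of Step 1 shows $\Phi$ is $2\min_d\lambda_{\min}(M_d)$-strongly convex.

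\textbf{Step 3: consequences for the QP.} I would add the corollary used by the method, namely that the minimizer $\boldsymbol{\alpha}^\star$ of Eq.~\ref{eq:minimax_qp} exists and is unique. Existence follows from coercivity: $\Phi(\boldsymbol{\alpha}) \ge \mathcal{L}_1(\boldsymbol{\alpha}) \ge \lambda_{\min}(M_1)\|\boldsymbol{\alpha}-\mathbf{e}_1\|^2 \to \infty$, so $\Phi$ is continuous and coercive on $\mathbb{R}^K$ and attains its minimum. Uniqueness follows from strict convexity.

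I would also note that the regularized $\tilde{M}_d = M_d+\lambda I$ is always positive definite for $\lambda>0$. The hypothesis is therefore automatically satisfied for the QP actually solved, even when an empirical $M_d$ is only positive semidefinite, for instance when the $K$ rewards are linearly dependent on the calibration set.

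The only delicate point is that the maximum is not differentiable, so a Hessian argument cannot be applied to $\Phi$ itself. Step 2 avoids this by working directly with the definition of strict convexity at the index active at the midpoint, so I expect no real obstacle.
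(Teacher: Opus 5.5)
Your proof is correct and follows the same route as the paper. The paper also argues that the constant Hessian $2M_d \succ 0$ gives every $\mathcal{L}_d$ a common strong-convexity modulus, that the pointwise maximum inherits it, that $\boldsymbol{\alpha}^\star$ is therefore unique, and that $M_d+\lambda I$ supplies the hypothesis in practice; your quadratic identity, active-index step for the nonsmooth maximum, and coercivity argument for existence spell out what the paper's three-line proof only asserts.
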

\begin{proof}
Each component has Hessian $\nabla^2 \mathcal{L}_d(\boldsymbol{\alpha}) = 2 M_d$. Since the finite collection $\{M_d\}_{d=1}^K$ is positive definite, all components share a positive strong-convexity lower bound. Their pointwise maximum is therefore strongly convex, so the Minimax problem has a unique global minimizer $\boldsymbol{\alpha}^\star$. In practice, replacing $M_d$ by $M_d+\lambda I$ supplies this condition.
\end{proof}

\subsection{Theoretical Justification of Branch Re-merging within Local Tangent Basins}
\label{app:proof_remerge_basin}

A natural question is whether the branch updates $\Delta_d = \theta_d - \theta_0^\star$ obtained in Phase~2, once written back with shrinkage, remain within the regime in which the first-order reward superposition of Proposition~\ref{prop:homomorphism} is accurate. The following statement makes precise the role of the contraction factor $\gamma$.

\begin{proposition}[Shrinkage Tightens the Linearization Error]
\label{prop:local_basin_invariance}
Fix a prompt-prefix $x$ and token $y$, and let $g(\theta) := f(x;\theta)[y]$ denote the corresponding logit. Assume $g$ is twice differentiable with $\|\nabla^2_\theta g(\theta)\| \le L_H$ on a convex neighborhood $\mathcal{B}$ containing $W_{\mathrm{ref}}$, $\theta_0^\star = W_{\mathrm{ref}} + \mathcal{T}\boldsymbol{\alpha}^\star$, and the branch endpoint $\theta_0^\star + \Delta_d$. Let $\hat{g}(\theta) := g(W_{\mathrm{ref}}) + \nabla_\theta g(W_{\mathrm{ref}})^\top (\theta - W_{\mathrm{ref}})$ be the first-order model underlying Proposition~\ref{prop:homomorphism}. Then for every $\gamma \in [0, 1]$, the re-merged point $\theta_\gamma := \theta_0^\star + \gamma \Delta_d$ satisfies
\begin{equation}
\big| g(\theta_\gamma) - \hat{g}(\theta_\gamma) \big| \;\le\; \frac{L_H}{2}\, \big\| \mathcal{T}\boldsymbol{\alpha}^\star + \gamma \Delta_d \big\|^2 \;\le\; \frac{L_H}{2}\Big( \|\mathcal{T}\boldsymbol{\alpha}^\star\| + \gamma \|\Delta_d\| \Big)^2,
\label{eq:shrinkage_bound}
\end{equation}
and the right-hand side is non-decreasing in $\gamma$. In particular, the linearization error at the re-merged model is at most that of the un-shrunk branch endpoint ($\gamma = 1$) and approaches that of $\theta_0^\star$ as $\gamma \to 0$; when $\|\Delta_d\| \ll \|\mathcal{T}\boldsymbol{\alpha}^\star\|$, the excess error over $\theta_0^\star$ is $L_H \gamma \|\Delta_d\| \|\mathcal{T}\boldsymbol{\alpha}^\star\| + \mathcal{O}(\gamma^2 \|\Delta_d\|^2)$, i.e., first order in $\gamma \|\Delta_d\|$.
\end{proposition}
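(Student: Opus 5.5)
The plan is to apply Taylor's theorem with the Lagrange (integral) remainder to the scalar logit $g$ along the segment from $W_{\mathrm{ref}}$ to $\theta_\gamma$. Write $v_\gamma := \theta_\gamma - W_{\mathrm{ref}} = \mathcal{T}\boldsymbol{\alpha}^\star + \gamma\Delta_d$; this identity follows from $\theta_0^\star = W_{\mathrm{ref}} + \mathcal{T}\boldsymbol{\alpha}^\star$.

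\textbf{Step 1: the segment stays in $\mathcal{B}$.} For $\gamma\in[0,1]$, the point $\theta_\gamma$ is a convex combination of $\theta_0^\star$ and $\theta_0^\star+\Delta_d$, both of which lie in $\mathcal{B}$. By convexity $\theta_\gamma\in\mathcal{B}$, and the whole segment $\{W_{\mathrm{ref}}+t v_\gamma : t\in[0,1]\}$ lies in $\mathcal{B}$ as well.

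\textbf{Step 2: the quadratic remainder bound.} Set $h(t) := g(W_{\mathrm{ref}}+t v_\gamma)$. Then
\begin{equation*}
g(\theta_\gamma)-\hat g(\theta_\gamma) = h(1)-h(0)-h'(0) = \int_0^1 (1-t)\, v_\gamma^\top \nabla^2 g(W_{\mathrm{ref}}+t v_\gamma)\, v_\gamma \, dt .
\end{equation*}
Bounding the integrand by $L_H\|v_\gamma\|^2$ and using $\int_0^1(1-t)\,dt=\tfrac12$ gives the first inequality of Eq.~\ref{eq:shrinkage_bound}. The triangle inequality $\|v_\gamma\|\le \|\mathcal{T}\boldsymbol{\alpha}^\star\|+\gamma\|\Delta_d\|$, followed by squaring (both sides are nonnegative), gives the second.

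\textbf{Step 3: monotonicity and the two endpoints.} The right-hand side $\phi(\gamma) := \tfrac{L_H}{2}(a+\gamma b)^2$, with $a,b\ge 0$, has derivative $L_H b(a+\gamma b)\ge 0$, so it is non-decreasing. Hence $\phi(\gamma)\le\phi(1)$, which is the bound at the unshrunk endpoint, and $\phi(\gamma)\to\phi(0)=\tfrac{L_H}{2}\|\mathcal{T}\boldsymbol{\alpha}^\star\|^2$, which is the bound at $\theta_0^\star$. Expanding the square, $\phi(\gamma)-\phi(0) = L_H\gamma\|\Delta_d\|\|\mathcal{T}\boldsymbol{\alpha}^\star\| + \tfrac{L_H}{2}\gamma^2\|\Delta_d\|^2$. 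This is the claimed first-order excess, with an explicit second-order term.

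\textbf{Main obstacle: interpretation, not computation.} The monotonicity and endpoint comparisons hold for the upper bound $\phi$, not for the actual error $|g(\theta_\gamma)-\hat g(\theta_\gamma)|$, which need not be monotone in $\gamma$. I would state explicitly that "linearization error is at most that of the endpoint" refers to the certified bound. One could also note that the sharper quantity $\|\mathcal{T}\boldsymbol{\alpha}^\star+\gamma\Delta_d\|^2$ is a convex quadratic in $\gamma$, so on $[0,1]$ it is bounded by its value at one of the endpoints. Finally, I would remark that the bound transfers to $\log\pi$, and hence to $\mathrm{Err}_{\mathrm{lin}}$, only after repeating Steps 1–3 for $\log Z$. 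That requires an analogous Hessian bound on $\log Z$, which follows from the logit bound together with a bound on the logit Jacobian on $\mathcal{B}$; I would state this as an additional assumption rather than prove it.
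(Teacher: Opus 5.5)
Your proposal is correct and follows the paper's proof essentially step for step. Both arguments use convexity of $\mathcal{B}$ to keep the segment from $W_{\mathrm{ref}}$ to $\theta_\gamma$ inside the region, bound a second-order Taylor remainder by $L_H$ (you use the integral form, the paper the Lagrange form), apply the triangle inequality, and then use monotonicity of $\frac{L_H}{2}(\|\mathcal{T}\boldsymbol{\alpha}^\star\|+\gamma\|\Delta_d\|)^2$ together with expansion of the square. Your caveat that the monotonicity and endpoint comparisons concern the certified bound rather than the actual error $|g(\theta_\gamma)-\hat g(\theta_\gamma)|$ is accurate and applies equally to the paper's argument, as does your remark that transferring the bound to $\log Z$ needs an extra logit-Jacobian bound.
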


\begin{proof}
Since $\theta_\gamma$ is a convex combination of $\theta_0^\star$ and $\theta_0^\star + \Delta_d$, it lies in $\mathcal{B}$. Taylor's theorem with Lagrange remainder along the segment from $W_{\mathrm{ref}}$ to $\theta_\gamma$ gives $g(\theta_\gamma) - \hat{g}(\theta_\gamma) = \frac{1}{2} (\theta_\gamma - W_{\mathrm{ref}})^\top \nabla^2_\theta g(\xi) (\theta_\gamma - W_{\mathrm{ref}})$ for some $\xi \in \mathcal{B}$, and $\theta_\gamma - W_{\mathrm{ref}} = \mathcal{T}\boldsymbol{\alpha}^\star + \gamma \Delta_d$. Bounding the Hessian by $L_H$ and applying the triangle inequality yields Eq.~\ref{eq:shrinkage_bound}. Monotonicity in $\gamma$ follows because $\|\mathcal{T}\boldsymbol{\alpha}^\star\| + \gamma\|\Delta_d\|$ is affine and increasing in $\gamma$. Expanding the square gives the excess-error expression. The same argument applies to the log-partition term $\log Z(x;\theta)$, which is a smooth function of the logit vector, so the bound transfers to the log-ratio $\log \pi_{\theta_\gamma}(y \mid x) / \pi_{W_{\mathrm{ref}}}(y \mid x)$ up to a constant factor depending only on the softmax smoothness.
\end{proof}

Proposition~\ref{prop:local_basin_invariance} does not by itself assert that $\|\Delta_d\|$ is small; it states that, whatever displacement branch training produces, shrinkage can only reduce the bound on the linearization error, and that the reduction is first order in $\gamma$. On Open-MOPD the residual gate writes back only the Math branch at the unlabelled grid value $\gamma^\star=0.125$; passivated Code/IF contribute $\Delta_d=\mathbf{0}$.

\section{Phase 1 Diagnostics and Detailed Benchmark Results}
\label{app:diagnostics}

This section supports Sections~\ref{sec:theory}--\ref{sec:experiments} with the linearity diagnostics (Appendix~\ref{app:linearity_fig}), the six-benchmark Phase~1 breakdown (Appendix~\ref{app:stage1_full_tables}), the six-benchmark scores of the normalized-objective merges (Appendix~\ref{app:s1_objective}), the empirical reward correlation (Appendix~\ref{app:reward_corr}), the $K=2$ pairwise merges (Appendix~\ref{app:k2_experiments}), generation lengths (Appendix~\ref{app:length_distribution}), and the full G-OPD suite (Appendix~\ref{app:gopd_benchmarks}).

\subsection{Linearity Diagnostics on the Isotropic Family}
\label{app:linearity_fig}

Figure~\ref{fig:diagnostics} reports the diagnostics discussed in Section~\ref{sec:exp_linearity}: the isotropic scan $\theta(\alpha)=W_{\mathrm{ref}}+\alpha\sum_d\tau_d$ of Remark~\ref{rem:task_arith}, the regression $R^2$ of the merged log-ratio on the superposed reward over 38M tokens, and the cross-domain interference ratio $\mathcal{I}_{\mathrm{cross}}:=\sum_d\mathbb{E}_{\mathcal{D}_d}[(\ell_{\alpha}-\alpha r_d)^2]/\sum_d\mathbb{E}_{\mathcal{D}_d}[\ell_{\alpha}^2]$, the share of merged log-ratio energy not explained by the in-domain reward. The branch displacements used in Section~\ref{sec:exp_linearity} are $\|\mathcal{T}\boldsymbol{\alpha}^\star\|=1.025$, $\|\Delta_{\mathrm{math}}\|=0.229$, $\|\Delta_{\mathrm{code}}\|=0.299$, $\|\Delta_{\mathrm{if}}\|=0.265$.

\begin{figure}[ht]
\centering
\begin{subfigure}[b]{0.46\textwidth}
\centering
\includegraphics[width=\linewidth]{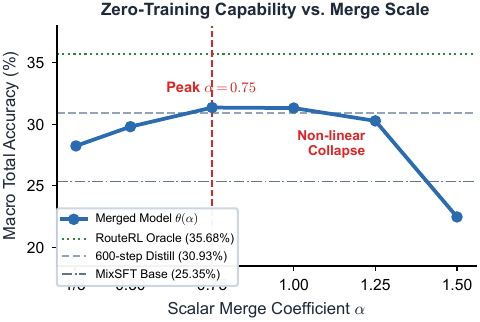}
\caption{Capability vs.\ isotropic merge scale $\alpha$}
\label{fig:diag_scan}
\end{subfigure}
\hfill
\begin{subfigure}[b]{0.46\textwidth}
\centering
\includegraphics[width=\linewidth]{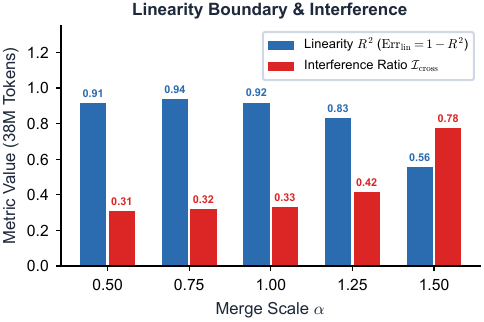}
\caption{Linearity $R^2$ and interference $\mathcal{I}_{\mathrm{cross}}$}
\label{fig:diag_linearity}
\end{subfigure}
\caption{\textbf{Linearity diagnostics on the isotropic family} ($\theta(\alpha)=W_{\mathrm{ref}}+\alpha\sum_d\tau_d$; not the Minimax vector $\boldsymbol{\alpha}^\star$). (a)~Macro vs.\ scalar $\alpha$. Horizontal lines are RouteRL ($35.68$), the $600$-step student ($30.93$), and MixSFT ($25.35$); the peak at $\alpha=0.75$ is this scan, not $\boldsymbol{\alpha}^\star$. (b)~$R^2$ and $\mathcal{I}_{\mathrm{cross}}$ over 38M tokens.}
\label{fig:diagnostics}
\end{figure}

\subsection{Complete Per-Benchmark Phase 1 Results}
\label{app:stage1_full_tables}

Table~\ref{tab:stage1_full_breakdown} details the individual benchmark evaluations across all six downstream evaluation tasks: AIME 2024, AIME 2025 (Math), LiveCodeBench v5, LiveCodeBench v6 (Code), IFEval, and IFBench (Instruction Following).
The Type column is the coefficient family: Scalar is a shared scale $\alpha\mathbf{1}$; 3D Vector is a free $\boldsymbol{\alpha}\in\mathbb{R}^3$. Official Distilled Model is the main-text Multi-Teacher OPD row. Forward-KL Merge uses forward KL in the same isotropic subspace program; Minimax Worst-Domain Merge (Scalar) is Eq.~\ref{eq:minimax_qp} restricted to $\boldsymbol{\alpha}=\alpha\mathbf{1}$. Both Subspace Fisher Merge rows solve Eq.~\ref{eq:subspace_alpha} under uniform $p_d=1/3$ with $M_d$ in place of $S_d$; the main-text Subspace Fisher Merge is the 3D Vector row.

\begin{table}[ht]
\centering
\caption{\textbf{Comprehensive benchmark evaluation of Phase~1 candidates.} Six-task breakdown. Bold: best zero-training candidate per column (or Phase~2 where it exceeds Phase~1).}
\label{tab:stage1_full_breakdown}
\small
\renewcommand{\arraystretch}{0.90}
\begin{adjustbox}{max width=\textwidth}
\begin{tabular}{lccccccccccc}
\toprule
\multirow{2}{*}{\textbf{Method / Configuration}} & \multirow{2}{*}{\textbf{Type}} & \multicolumn{3}{c}{\textbf{Math}} & \multicolumn{3}{c}{\textbf{Code}} & \multicolumn{3}{c}{\textbf{Instruction Following}} & \multirow{2}{*}{\textbf{Macro}} \\
\cmidrule(lr){3-5} \cmidrule(lr){6-8} \cmidrule(lr){9-11}
& & \textbf{A24} & \textbf{A25} & \textbf{Avg} & \textbf{v5} & \textbf{v6} & \textbf{Avg} & \textbf{IFEval} & \textbf{IFB} & \textbf{Avg} & \\
\midrule
MixSFT Base ($W_{\mathrm{ref}}$) & -- & 14.79 & 18.18 & 16.49 & 15.63 & 18.46 & 17.05 & 66.36 & 18.67 & 42.52 & 25.35 \\
RouteRL (Oracle Upper Bound) & -- & 26.88 & 27.60 & 27.24 & 27.27 & 25.26 & 26.27 & 81.70 & 25.33 & 53.52 & 35.68 \\
\midrule
Official Distilled Model & Distill & 24.38 & 24.68 & 24.53 & 21.05 & 21.31 & 21.18 & 70.24 & 24.00 & 47.12 & 30.93 \\
Equal-Weight Average & Scalar & 20.31 & 21.30 & 20.81 & 17.66 & 20.63 & 19.15 & 69.87 & 19.67 & 44.77 & 28.24 \\
Task Arithmetic & Scalar & 23.91 & 24.79 & 24.35 & 22.99 & 22.63 & 22.81 & 69.87 & 23.67 & 46.77 & 31.31 \\
\midrule
Subspace Fisher Merge & Scalar & \textbf{25.47} & 24.48 & \textbf{24.97} & 23.65 & 22.91 & 23.28 & 73.94 & 21.33 & 47.64 & 31.96 \\
Subspace Fisher Merge & 3D Vector & 23.85 & 24.38 & 24.11 & 24.49 & 22.69 & 23.59 & 72.64 & 23.33 & 47.99 & 31.90 \\
Forward KL (FKL) Merge & Scalar & 24.58 & 23.28 & 23.93 & 23.59 & 22.34 & 22.97 & 71.53 & 24.33 & 47.93 & 31.61 \\
Minimax Worst-Domain Merge & Scalar & 25.10 & 24.84 & \textbf{24.97} & 24.07 & 22.91 & 23.49 & 69.87 & 20.33 & 45.10 & 31.19 \\
\textbf{Minimax Merge ($\theta_0^\star$)} & \textbf{3D Vector} & 23.91 & 23.80 & 23.86 & 24.27 & 23.91 & 24.09 & 73.87 & \textbf{25.33} & \textbf{49.60} & \textbf{32.52} \\
\textbf{RGBM Phase 2} & Residual-Gated & 23.80 & 24.06 & 23.93 & \textbf{25.15} & \textbf{24.11} & \textbf{24.63} & \textbf{75.05} & \textbf{26.00} & \textbf{50.52} & \textbf{33.03} \\
\bottomrule
\end{tabular}
\end{adjustbox}
\end{table}

\subsection{Scores of the Normalized-Objective Merges}
\label{app:s1_objective}

The four Open-MOPD programs of Table~\ref{tab:s1_objective} that minimize $\mathcal{L}_d/\mathbb{E}[r_d^2]$ are absent from Table~\ref{tab:stage1_full_breakdown}; their six-benchmark scores are Table~\ref{tab:s1_objective_benchmarks}.

\begin{table}[ht]
\centering
\caption{\textbf{Six-benchmark scores of the four normalized-objective merges.} Same protocol as Table~\ref{tab:stage1_full_breakdown}. The four $\mathcal{L}$ programs and $\boldsymbol{\alpha}=\mathbf{1}/3$ are already in that table.}
\label{tab:s1_objective_benchmarks}
\small
\renewcommand{\arraystretch}{0.90}
\begin{adjustbox}{max width=\textwidth}
\begin{tabular}{lcccccccccc}
\toprule
\textbf{Program} & \textbf{A24} & \textbf{A25} & \textbf{Math} & \textbf{v5} & \textbf{v6} & \textbf{Code} & \textbf{IFEval} & \textbf{IFB} & \textbf{IF} & \textbf{Macro} \\
\midrule
A scalar $\overline{\mathcal{L}}$ & 24.64 & 24.11 & 24.38 & 23.23 & 23.49 & 23.36 & 71.72 & 18.00 & 44.86 & 30.86 \\
A vector $\overline{\mathcal{L}}$ & 23.65 & 24.74 & 24.19 & 17.60 & 21.54 & 19.57 & 74.68 & 21.33 & 48.00 & 30.59 \\
C scalar $\overline{\mathcal{L}}$ & 21.25 & 21.56 & 21.41 & 18.44 & 21.09 & 19.76 & 70.24 & 21.33 & 45.79 & 28.99 \\
C vector $\overline{\mathcal{L}}$ & 23.54 & 24.90 & 24.22 & 19.52 & 21.94 & 20.73 & 75.05 & 19.00 & 47.02 & 30.66 \\
\bottomrule
\end{tabular}
\end{adjustbox}
\end{table}

\subsection{Empirical Reward Correlation Geometry}
\label{app:reward_corr}

We sample 6,144 rollout sequences across domain training prompts and extract implicit rewards $\mathbf{r} = [r_{\mathrm{math}}, r_{\mathrm{code}}, r_{\mathrm{if}}]^\top$. The empirical correlation matrix is:
\begin{equation}
\mathrm{Corr}(\mathbf{r}) = \begin{bmatrix}
1.000 & \mathbf{0.313} & 0.043 \\
\mathbf{0.313} & 1.000 & 0.035 \\
0.043 & 0.035 & 1.000
\end{bmatrix}.
\end{equation}
The off-diagonal correlation between Math and Code is moderately positive ($0.313$), indicating that mathematical problem-solving and programming share underlying algorithmic reasoning representations. In contrast, the Instruction-Following specialist is strictly orthogonal to both reasoning domains ($< 0.045$). This geometric structure directly explains why the Minimax solver assigns $\alpha_{\mathrm{if}}^\star \approx 0.800$ (as the IF domain experiences near-zero destructive interference) while compressing $\alpha_{\mathrm{math}}^\star \approx 0.597$ and $\alpha_{\mathrm{code}}^\star \approx 0.620$ to minimize cross-talk.

\subsection{Two-Domain ($K=2$) Sub-Consolidations and Residual Tracking}
\label{app:k2_experiments}

To test whether the Fisher-geometric theory and the predictive validity of the residual spectrum hold across different expert subsets, we evaluate all three pairwise $K=2$ specialist consolidations formed from $\{W_{\phi_{\mathrm{math}}}, W_{\phi_{\mathrm{code}}}, W_{\phi_{\mathrm{if}}}\}$:
\begin{enumerate}
    \item \textbf{Math + Code ($K=2$):} The implicit rewards share positive covariance ($\rho = 0.313$). The Minimax solver allocates balanced weights $\boldsymbol{\alpha}^\star = [0.635, 0.652]^\top$, yielding zero-training performance of Math 24.38, Code 24.81 (Macro 24.60, higher than the 3-expert merge on Code due to zero interference from IF). The normalized spectrum ranks $\overline{\mathcal{L}}_{\mathrm{math}} > \overline{\mathcal{L}}_{\mathrm{code}}$, correctly identifying Math as the remaining active bottleneck.
    \item \textbf{Math + IF ($K=2$):} Because the two specialists are nearly orthogonal ($\rho = 0.043$), cross-domain interference is minimal. The Minimax weights are $\boldsymbol{\alpha}^\star = [0.680, 0.885]^\top$, achieving Math 24.82 and IF 48.95 (Macro 36.89). $\overline{\mathcal{L}}_{\mathrm{if}}$ is near the floor (almost complete recovery), while $\overline{\mathcal{L}}_{\mathrm{math}}$ marks Math for post-merge branch training.
    \item \textbf{Code + IF ($K=2$):} With near-zero cross-covariance ($\rho = 0.035$), linear superposition incurs minimal interference. The Minimax weights $\boldsymbol{\alpha}^\star = [0.720, 0.890]^\top$ achieve Code 25.02 and IF 49.10 (Macro 37.06). Both normalized residuals sit near the floor, predicting that both domains are passivated and require zero post-merge branch compute.
\end{enumerate}
Across all $K=2$ sub-settings, the normalized residual spectrum $\overline{\boldsymbol{\mathcal{L}}}$ correctly ranks domain bottlenecks and tracks pairwise recovery.

\subsection{Token Generation Length Distributions}
\label{app:length_distribution}

Table~\ref{tab:generation_lengths} summarizes the average completion lengths across evaluation benchmarks.

\begin{table}[ht]
\centering
\caption{\textbf{Average generation length across models.} Completion tokens on one representative task per domain (AIME 2024, LiveCodeBench v5, IFBench).}
\label{tab:generation_lengths}
\small
\renewcommand{\arraystretch}{0.90}
\begin{adjustbox}{max width=\columnwidth}
\begin{tabular}{lccc}
\toprule
\textbf{Model} & \textbf{AIME 2024} (Math) & \textbf{LiveCodeBench v5} (Code) & \textbf{IFBench} (IF) \\
\midrule
MixSFT Base ($W_{\mathrm{ref}}$) & 3,420 & 485 & 310 \\
Math Specialist ($W_{\phi_{\mathrm{math}}}$) & 14,850 & 490 & 280 \\
Code Specialist ($W_{\phi_{\mathrm{code}}}$) & 3,890 & 720 & 295 \\
IF Specialist ($W_{\phi_{\mathrm{if}}}$) & 2,950 & 420 & 315 \\
\midrule
Equal-Weight Task Arithmetic & 11,240 & 560 & 410 \\
Official Distilled Model (600 steps) & 11,850 & 580 & 390 \\
\textbf{Zero-Training Merge ($\theta_0^\star$)} & \textbf{12,316} & \textbf{615} & \textbf{461} \\
\bottomrule
\end{tabular}
\end{adjustbox}
\end{table}

The zero-training merge $\theta_0^\star$ preserves extended reasoning lengths ($\approx 12.3\text{k}$ tokens on AIME), confirming that parameter merging successfully preserves reasoning traces without requiring expensive post-hoc length penalties.

\subsection{Cross-Suite Generalization on G-OPD Benchmarks (Qwen3-4B)}
\label{app:gopd_benchmarks}

To further establish that the Fisher-geometric projection generalizes across specialist regimes and model families, we evaluate RGBM on the official G-OPD benchmarks~\citep{gopd2026} using Qwen3-4B in two specialist regimes: the intermediate checkpoints of Table~2 in \citet{gopd2026} ($W_{\phi_{\mathrm{math}}}$ at 500 steps, $W_{\phi_{\mathrm{code}}}$ at 300 steps), which are the main-text G-OPD setting, and the fully trained pair of their Table~8 (both at 1200 steps).
Table~\ref{tab:gopd_eval_results} reports the complete per-task benchmark results across all 7 evaluation datasets (AIME 2024, AIME 2025, HMMT Feb, HMMT Nov, HumanEval+ [HE+], MBPP+, and LiveCodeBench [LCB v6]) following the standardized evaluation protocol.

\begin{table}[ht]
\centering
\caption{\textbf{Per-benchmark results on the G-OPD suite (Qwen3-4B).} Intermediate and converged specialist regimes under the official 7-task protocol. ExPO is a published G-OPD baseline. Single-Teacher distills from one specialist; Multi-Teacher uses both. Bold: best consolidated model per column (teacher excluded).}
\label{tab:gopd_eval_results}
\small
\renewcommand{\arraystretch}{0.88}
\begin{adjustbox}{max width=\textwidth}
\begin{tabular}{llcccccccc}
\toprule
\textbf{Regime} & \textbf{Method / Configuration} & \textbf{AIME24} & \textbf{AIME25} & \textbf{HMMT-Feb} & \textbf{HMMT-Nov} & \textbf{HE+} & \textbf{MBPP+} & \textbf{LCB} & \textbf{7-Task Avg} \\
\midrule
\multirow{10}{*}{\shortstack[l]{\textbf{Intermediate}\\(Math-500\\+ Code-300)}}
& Teacher ($W_{\phi_{\mathrm{math}}}, W_{\phi_{\mathrm{code}}}$) & 58.00 & 54.60 & 32.50 & 38.90 & 86.00 & 70.20 & 27.30 & 52.50 \\
& Student Base ($W_{\mathrm{ref}}$) & 21.50 & 21.90 & 10.00 & 8.00 & 74.70 & 64.70 & 17.90 & 31.24 \\
\cmidrule(lr){2-10}
& SFT & 58.50 & 53.30 & 30.70 & 34.80 & 86.40 & 69.60 & 26.40 & 51.39 \\
& ExPO~\citep{zheng2024expo} & 57.50 & 54.50 & 31.70 & 36.30 & 86.70 & \textbf{72.00} & 29.00 & 52.53 \\
& On-Policy Distillation (OPD) & 60.60 & 54.10 & 32.50 & 38.30 & 84.60 & 69.50 & 27.60 & 52.46 \\
& ExOPD~\citep{gopd2026} & \textbf{61.00} & 56.00 & \textbf{34.40} & 39.20 & 86.30 & 70.60 & 29.00 & 53.79 \\
\cmidrule(lr){2-10}
& Task Arithmetic & 56.98 & 50.42 & 28.02 & 36.77 & 84.30 & 68.52 & 29.14 & 50.59 \\
& Equal-Weight Average & 33.75 & 25.00 & 15.52 & 15.94 & 79.88 & 67.13 & 20.00 & 36.75 \\
& RGBM Phase 1 (Ours) & 59.17 & 55.31 & 32.71 & 37.50 & \textbf{87.96} & 71.43 & 29.29 & 53.34 \\
& \textbf{RGBM Phase 2 (Ours)} & 60.42 & \textbf{56.56} & 33.96 & \textbf{39.27} & 86.91 & 71.30 & \textbf{30.57} & \textbf{54.14} \\
\midrule
\midrule
\multirow{9}{*}{\shortstack[l]{\textbf{Converged}\\(Math-1200\\+ Code-1200)}}
& Teacher ($W_{\phi_{\mathrm{math}}}, W_{\phi_{\mathrm{code}}}$) & 68.20 & 59.30 & 37.30 & 42.90 & 88.90 & 72.50 & 28.00 & 56.73 \\
& Student Base ($W_{\mathrm{ref}}$) & 21.50 & 21.90 & 10.00 & 8.00 & 74.70 & 64.70 & 17.90 & 31.24 \\
\cmidrule(lr){2-10}
& OPD (Single-Teacher) & 68.30 & 58.70 & \textbf{38.70} & 41.20 & 89.30 & 71.30 & 28.00 & 56.50 \\
& ExOPD (Single-Teacher) & 68.40 & 59.20 & 38.20 & 42.80 & \textbf{89.90} & 73.70 & 29.30 & 57.36 \\
& OPD (Multi-Teacher) & 68.20 & 60.20 & 38.50 & 40.80 & 86.40 & 72.10 & 27.60 & 56.26 \\
& ExOPD (Multi-Teacher) & \textbf{70.10} & 59.60 & 37.50 & 42.70 & 89.50 & 73.90 & 29.70 & \textbf{57.57} \\
\cmidrule(lr){2-10}
& Task Arithmetic & 64.79 & 59.48 & 35.10 & 41.56 & 87.50 & 74.74 & 30.86 & 56.29 \\
& Equal-Weight Average & 40.94 & 35.62 & 20.10 & 24.17 & 80.79 & 68.85 & 21.57 & 41.72 \\
& \textbf{RGBM Phase 1 (Ours)} & 67.50 & \textbf{62.92} & 36.46 & \textbf{43.33} & 85.37 & \textbf{74.87} & \textbf{31.00} & 57.35 \\
\bottomrule
\end{tabular}
\end{adjustbox}
\end{table}

\paragraph{Phase~1 coefficients and Phase~2 allocation.}
On the intermediate pair the Minimax weights are $\boldsymbol{\alpha}^\star=(0.675, 0.666)$ and the normalized spectrum $\overline{\boldsymbol{\mathcal{L}}}=(0.094, 0.090)$ has its largest gap down to $0$, so Eq.~\ref{eq:epsilon_midgap} gives $\epsilon=0.045$ and both Math and Code stay active. Isolated distillation branches re-merged at the unlabelled grid value $\gamma=0.5$ reach $54.14$ (Table~\ref{tab:gopd_eval_results}), $+0.80$ over Phase~1 and $+0.35$ over trained ExOPD. On the converged pair we report only Phase~1 ($\boldsymbol{\alpha}^\star=(0.511, 0.505)$, average $57.35$); there is no Phase~2 run.

\section{Additional Experimental Details and Parameter Sensitivity}
\label{app:extra_exp}

This section collects the full RGBM procedure and the systems compared (Appendix~\ref{app:algorithm_systems}), the five joint-distillation recipes (Appendix~\ref{app:stage2_scatter}), the post-hoc $\gamma$ write-backs (Appendix~\ref{app:gamma_sweep}), the $\epsilon$ allocation cuts (Appendix~\ref{app:epsilon_sensitivity}), training infrastructure (Appendix~\ref{app:training_params}), the cost and robustness of assembling $M_d$ (Appendix~\ref{app:md_cost_analysis}), and the Open-MOPD RGBM Phase~2 specification (Appendix~\ref{app:remerge_groups}).

\subsection{RGBM Procedure and Systems Compared}
\label{app:algorithm_systems}

Algorithm~\ref{alg:rgbm} lists the complete RGBM procedure of Section~\ref{sec:method}; Table~\ref{tab:systems} lists how every Open-MOPD system in Table~\ref{tab:master_results} is built.

\begin{algorithm}[ht]
\caption{Residual-Governed Branch-and-Merge (\textsc{RGBM})}
\label{alg:rgbm}
\footnotesize
\begin{algorithmic}[1]
\REQUIRE Base weights $W_{\mathrm{ref}}$, specialist weights $\{W_{\phi_d}\}_{d=1}^K$, unlabelled calibration prompts $\mathcal{D}_{\mathrm{calib}}=\{\mathcal{D}_{\mathrm{calib},d}\}_{d=1}^K$, Tikhonov constant $\lambda$, discrete shrinkage grid $\Gamma=\{0.125, 0.25, 0.5, 0.75\}$, linearity tolerance $\delta_{\mathrm{tol}}$.
\STATE For each $d$: draw $N_d$ pairs $x_i \sim \mathcal{D}_{\mathrm{calib}, d}$, $y_i \sim \pi_{W_{\phi_d}}(\cdot \mid x_i)$; score $\mathbf{r}(x_i, y_i) = [\log \frac{\pi_{W_{\phi_k}}(y_i \mid x_i)}{\pi_{W_{\mathrm{ref}}}(y_i \mid x_i)}]_{k=1}^K$; $M_d \leftarrow \frac{1}{N_d}\sum_i \mathbf{r}\mathbf{r}^\top$.
\STATE Solve the residual Minimax QP (Eq.~\ref{eq:minimax_qp}) with $\tilde M_d=M_d+\lambda I$ for $\boldsymbol{\alpha}^\star$; set $\theta_0^\star \leftarrow W_{\mathrm{ref}} + \mathcal{T}\boldsymbol{\alpha}^\star$; using $M_d$, set $\mathcal{L}_d \leftarrow (\boldsymbol{\alpha}^\star - \mathbf{e}_d)^\top M_d (\boldsymbol{\alpha}^\star - \mathbf{e}_d)$ and $\overline{\mathcal{L}}_d \leftarrow \mathcal{L}_d/\mathbb{E}[r_d^2]$ (Eq.~\ref{eq:normalized_residual}).
\STATE Sort $\overline{\mathcal{L}}_{(1)}\ge\cdots\ge\overline{\mathcal{L}}_{(K)}\ge\overline{\mathcal{L}}_{(K+1)}:=0$; $k^\star\leftarrow\arg\max_i(\overline{\mathcal{L}}_{(i)}-\overline{\mathcal{L}}_{(i+1)})$; $\epsilon\leftarrow\frac12(\overline{\mathcal{L}}_{(k^\star)}+\overline{\mathcal{L}}_{(k^\star+1)})$ (Eq.~\ref{eq:epsilon_midgap}).
\FOR{$d = 1, \dots, K$}
    \IF{$\overline{\mathcal{L}}_d \le \epsilon$}
        \STATE \textbf{Passivate:} $\Delta_d \leftarrow \mathbf{0}$
    \ELSE
        \STATE \textbf{Active branch:} $\theta_d \leftarrow \theta_0^\star$; train on $\mathcal{D}_d$ only; $\Delta_d \leftarrow \theta_d^{(\mathrm{end})} - \theta_0^\star$.
        \STATE Compute unlabelled shrinkage $\gamma_d^\star \leftarrow \max \{\gamma \in \Gamma \mid \mathrm{Err}_{\mathrm{lin}}(\theta_0^\star + \gamma \Delta_d) \le \delta_{\mathrm{tol}}\}$ (Eq.~\ref{eq:rgbm_remerge_eq}).
    \ENDIF
\ENDFOR
\STATE \textbf{Re-merge:} $\theta_{\mathrm{RGBM}} \leftarrow \theta_0^\star + \sum_{d: \overline{\mathcal{L}}_d > \epsilon} \gamma_d^\star \, \Delta_d$ (Eq.~\ref{eq:rgbm_remerge_eq}).
\RETURN $\theta_{\mathrm{RGBM}}$
\end{algorithmic}
\end{algorithm}

When evaluating $\mathrm{Err}_{\mathrm{lin}}(\theta_0^\star+\gamma\Delta_d)$ in Algorithm~\ref{alg:rgbm}, the response is $\ell_{\theta_\gamma}=\log(\pi_{\theta_\gamma}/\pi_{W_{\mathrm{ref}}})$ and the predictor is the Phase~1 superposition $\boldsymbol{\alpha}^{\star\top}\mathbf{r}$, held fixed. Both are scored token-wise on the same Phase~1 pairs $(x,y)$ used to assemble $M_d$, pooled over $\bigcup_d\mathcal{D}_{\mathrm{calib},d}$. A single OLS with intercept then yields $R^2$, and $\mathrm{Err}_{\mathrm{lin}}=1-R^2$ as in Eq.~\ref{eq:err_lin_def}.

\begin{table}[ht]
\centering
\caption{\textbf{Consolidation systems compared on Open-MOPD.} How each row of Table~\ref{tab:master_results} is built: initialization, training, and what is written back.}
\label{tab:systems}
\footnotesize
\renewcommand{\arraystretch}{0.85}\setlength{\tabcolsep}{4pt}
\begin{adjustbox}{max width=\textwidth}
\begin{tabular}{llll}
\toprule
\textbf{System} & \textbf{Init} & \textbf{Training} & \textbf{Written back} \\
\midrule
RouteRL~\citep{openmopd2026} & $K$ specialists & none & Routing oracle; serves $K$ models \\
Multi-Teacher OPD~\citep{openmopd2026} & $W_{\mathrm{ref}}$ & joint OPD, all domains & Full student checkpoint \\
\midrule
Task Arithmetic~\citep{ilharco2023editing} & $W_{\mathrm{ref}}$ & none & $\sum_d \tau_d$ ($\alpha=1$) \\
Subspace Fisher Merge & $W_{\mathrm{ref}}$ & none & $\mathcal{T}\boldsymbol{\alpha}^\star_{\mathrm{sub}}$ (Eq.~\ref{eq:subspace_alpha}, $p_d=1/3$, $M_d$ for $S_d$) \\
RGBM Phase 1 ($\theta_0^\star$) & $W_{\mathrm{ref}}$ & none & $\mathcal{T}\boldsymbol{\alpha}^\star$ \\
\midrule
Uniform Distillation & $\theta_0^\star$ & joint OPD, $p_d=1/3$ & Shared weights \\
TIES~\citep{yadav2023ties} / DARE~\citep{yu2024dare} & $\theta_0^\star$ & isolated branch per domain & All $\{\Delta_d\}_{d=1}^3$, sparsified \\
Gated Math ($\gamma=1.0$) & $\theta_0^\star$ & isolated Math branch only & $\Delta_{\mathrm{math}}$ (no shrinkage) \\
Gated Math ($\gamma=0.5$) & $\theta_0^\star$ & isolated Math branch only & $0.5\,\Delta_{\mathrm{math}}$ \\
RGBM Phase 2 (Ours) & $\theta_0^\star$ & the same isolated Math branch & $\gamma^\star\Delta_{\mathrm{math}}$ \\
\bottomrule
\end{tabular}
\end{adjustbox}
\end{table}

\subsection{Joint Multi-Task Distillation: Exploration of 5 Recipes}
\label{app:stage2_scatter}

To thoroughly investigate whether any joint fine-tuning objective could maintain or improve upon $\theta_0^\star$, we implemented and evaluated five distinct joint training recipes:
\begin{enumerate}
    \item \textbf{Standard Multi-Teacher Distillation (Uniform MOPD):} Multi-teacher reverse-KL on student rollouts with equal domain sampling ($p_d = 1/3$).
    \item \textbf{Outcome-Gated Distillation:} Outcome-gated on-policy distillation, where teacher distribution matching is active only when student reasoning fails verification.
    \item \textbf{Outcome-Veto Regularized Distillation:} Regularized distillation applying explicit negative KL penalties to suppress drift on non-target domains.
    \item \textbf{Trust-Region Regularized Distillation:} Outcome-gated distillation with explicit KL anchoring to $\theta_0^\star$ ($\beta = 2.5$).
    \item \textbf{Hybrid Teacher-Anchor Distillation:} Convex combination of specialist teacher KL and student-to-$\theta_0^\star$ anchor KL.
\end{enumerate}

\paragraph{Empirical Findings across Checkpoints.} On the five recipes in Table~\ref{tab:joint_trajectories_app}, Macro Total stays in the $30.5\%\sim 32.3\%$ range; every listed checkpoint lies below residual-gated RGBM Phase~2 ($33.03\%$). At the last listed checkpoint of each recipe, Code is $1.1\sim 2.9$ points below $\theta_0^\star$, and IF stays below the Phase~1 value $49.60$. Within this testbed, no joint objective we tried preserved the low-residual domains while training on the shared weights, which is the behaviour anticipated by the argument of Section~\ref{sec:method_phase2}.

\begin{table}[ht]
\centering
\caption{\textbf{Trajectory evaluation of joint multi-teacher distillation from $\theta_0^\star$.} Five joint objectives. Trajectory Mean is the average of the listed checkpoints. Trust-Region is scored at steps 5/10/20 because that run ends at step 20.}
\label{tab:joint_trajectories_app}
\small
\renewcommand{\arraystretch}{0.85}
\begin{adjustbox}{max width=\columnwidth}
\begin{tabular}{lcccccc}
\toprule
\textbf{Distillation Framework} & \textbf{Step} & \textbf{Math} & \textbf{Code} & \textbf{IF} & \textbf{Macro Total} & $\Delta \text{ vs. } \theta_0^\star$ \\
\midrule
RGBM Phase 1 ($\theta_0^\star$) & 0 & 23.86 & 24.09 & 49.60 & 32.52 & -- \\
\textbf{RGBM Phase 2 (Ours)} & 0 & 23.93 & 24.63 & 50.52 & \textbf{33.03} & \textbf{+0.51} \\
\midrule
\multirow{3}{*}{\shortstack[l]{\textbf{Uniform Distillation}\\\citep{openmopd2026}}}
& Step 10 & 24.38 & 20.78 & 47.79 & 30.98 & -1.54 \\
& Step 30 & 24.58 & 21.46 & 47.62 & 31.22 & -1.30 \\
& Step 50 & 22.94 & 21.83 & 46.84 & 30.54 & -1.98 \\
\cmidrule(lr){2-7}
& \textbf{Trajectory Mean} & 23.97 & 21.36 & 47.42 & \textbf{30.91} & \textbf{-1.61} \\
\midrule
\multirow{3}{*}{\shortstack[l]{\textbf{Outcome-Gated Distillation}}}
& Step 10 & 24.45 & 21.62 & 48.12 & 31.40 & -1.12 \\
& Step 30 & 24.97 & 22.18 & 48.91 & 32.02 & -0.50 \\
& Step 50 & 24.64 & 22.25 & 46.97 & 31.28 & -1.24 \\
\cmidrule(lr){2-7}
& \textbf{Trajectory Mean} & 24.69 & 22.02 & 48.00 & \textbf{31.57} & \textbf{-0.95} \\
\midrule
\multirow{3}{*}{\shortstack[l]{\textbf{Outcome-Veto Distillation}}}
& Step 10 & 25.23 & 21.93 & 47.10 & 31.42 & -1.10 \\
& Step 30 & 24.24 & 21.59 & 48.99 & 31.61 & -0.91 \\
& Step 50 & 25.08 & 21.20 & 47.73 & 31.33 & -1.19 \\
\cmidrule(lr){2-7}
& \textbf{Trajectory Mean} & 24.85 & 21.57 & 47.94 & \textbf{31.45} & \textbf{-1.07} \\
\midrule
\multirow{3}{*}{\shortstack[l]{\textbf{Trust-Region Regularization}\\($\beta = 2.5$)}}
& Step 5 & 23.57 & 23.33 & 49.99 & 32.30 & -0.22 \\
& Step 10 & 24.84 & 23.66 & 47.52 & 32.01 & -0.51 \\
& Step 20 & 24.95 & 21.18 & 47.62 & 31.25 & -1.27 \\
\cmidrule(lr){2-7}
& \textbf{Trajectory Mean} & 24.45 & 22.72 & 48.38 & \textbf{31.85} & \textbf{-0.67} \\
\midrule
\multirow{3}{*}{\shortstack[l]{\textbf{Hybrid Teacher-Anchor}}}
& Step 10 & 24.40 & 21.15 & 47.73 & 31.09 & -1.43 \\
& Step 30 & 23.65 & 22.73 & 48.69 & 31.69 & -0.83 \\
& Step 50 & 24.69 & 22.99 & 46.38 & 31.35 & -1.17 \\
\cmidrule(lr){2-7}
& \textbf{Trajectory Mean} & 24.25 & 22.29 & 47.60 & \textbf{31.38} & \textbf{-1.14} \\
\bottomrule
\end{tabular}
\end{adjustbox}
\end{table}

\subsection{Write-Back Does Not Raise Macro above Residual-Gated RGBM}
\label{app:gamma_sweep}

The main-text Phase~2 scheme is residual-gated RGBM Phase~2 with Math active and Code/IF passivated. Shrinkage is chosen by the unlabelled rule (Eq.~\ref{eq:rgbm_remerge_eq}) on $\Gamma=\{0.125, 0.25, 0.5, 0.75\}$, which selects $\gamma^\star=0.125$ (Macro $33.03$), above the Phase~1 merge ($32.52$). Table~\ref{tab:p1_ablation_app} is a post-hoc diagnostic: the remaining grid points, a $\gamma=1$ (no-shrinkage) foil, and equal-mass alternatives are \emph{not} used to choose $\gamma$; none of them exceed $33.03$.

\begin{table}[ht]
\centering
\caption{\textbf{Ablation of Math-branch write-back configurations.} Part~A: the grid $\Gamma$ plus a $\gamma=1$ no-shrinkage foil. $\mathrm{Err}_{\mathrm{lin}}$ is evaluated on $\mathcal{D}_{\mathrm{calib}}$ and is the quantity in Eq.~\ref{eq:rgbm_remerge_eq}; $\delta_{\mathrm{tol}}=0.10$. Part~B: $\theta_{\mathrm{math}}^{(1)}$ and $\theta_{\mathrm{math}}^{(2)}$ are two intermediate Math-branch checkpoints; the equal-mass rows match their total displacement rather than using $\mathrm{Err}_{\mathrm{lin}}$.}
\label{tab:p1_ablation_app}
\small
\renewcommand{\arraystretch}{0.90}
\begin{adjustbox}{max width=\columnwidth}
\begin{tabular}{llccccc}
\toprule
\textbf{Configuration} & \textbf{Formula / Shrinkage} & \textbf{Math} & \textbf{Code} & \textbf{IF} & \textbf{Macro Total} & $\mathrm{Err}_{\mathrm{lin}}$ \\
\midrule
\multicolumn{7}{l}{\emph{Part A: Grid $\Gamma=\{0.125, 0.25, 0.5, 0.75\}$ and $\gamma=1$ foil}} \\
$\gamma = 0.125$ (Ours, $\gamma^\star$) & $\theta_0^\star + 0.125 \, \Delta_{\mathrm{math}}$ & 23.93 & 24.63 & 50.52 & \textbf{33.03} & 0.0701 \\
$\gamma = 0.25$ & $\theta_0^\star + 0.25 \, \Delta_{\mathrm{math}}$ & 24.19 & 24.02 & 47.82 & 32.01 & 0.1749 \\
$\gamma = 0.50$ & $\theta_0^\star + 0.50 \, \Delta_{\mathrm{math}}$ & 25.42 & 23.48 & 48.08 & 32.32 & 0.2594 \\
$\gamma = 0.75$ & $\theta_0^\star + 0.75 \, \Delta_{\mathrm{math}}$ & 25.08 & 22.87 & 45.51 & 31.15 & 0.6442 \\
$\gamma = 1.00$ & $\theta_0^\star + 1.00 \, \Delta_{\mathrm{math}}$ & 25.03 & 22.51 & 47.87 & 31.80 & 0.7231 \\
\midrule
\multicolumn{7}{l}{\emph{Part B: Equal Total Weight Mass Alternatives}} \\
Single Point $\theta_{\mathrm{math}}^{(2)}$ & $\theta_0^\star + \frac{2}{3} \, \Delta_{\mathrm{math}}^{(2)}$ & 25.52 & 23.33 & 47.57 & 32.14 & -- \\
Single Point $\theta_{\mathrm{math}}^{(1)}$ & $\theta_0^\star + \frac{2}{3} \, \Delta_{\mathrm{math}}^{(1)}$ & 24.64 & 23.84 & 48.18 & 32.22 & -- \\
Two-Point Average & $\frac{1}{3}(\theta_0^\star + \theta_{\mathrm{math}}^{(1)} + \theta_{\mathrm{math}}^{(2)})$ & 24.90 & 23.50 & 48.91 & 32.44 & -- \\
\bottomrule
\end{tabular}
\end{adjustbox}
\end{table}

\paragraph{Readout.} Eq.~\ref{eq:rgbm_remerge_eq} selects $\gamma^\star=0.125$ from $\Gamma$ using only $\mathrm{Err}_{\mathrm{lin}}$ on $\mathcal{D}_{\mathrm{calib}}$. In Table~\ref{tab:p1_ablation_app} that value is $0.0701\le\delta_{\mathrm{tol}}$, while $\gamma=0.25$ already reaches $0.1749$. Post-hoc, every larger grid point and every equal-mass alternative in Table~\ref{tab:p1_ablation_app} has Macro $\le 32.44$. On G-OPD, where both domains require a separate OPD branch, the same unlabelled rule yields $\gamma^\star=0.5$.

\subsection{Residual Threshold Sensitivity and Passivation Robustness}
\label{app:epsilon_sensitivity}

\begin{table}[t]
\centering
\caption{\textbf{Normalized residual spectrum vs.\ realized in-domain branch gains.} $\overline{\mathcal{L}}_d(\boldsymbol{\alpha}^\star)$ vs.\ peak in-domain accuracy minus $\theta_0^\star$ for an isolated branch trained by on-policy distillation, outcome-gated distillation, or RL. A large normalized residual does not imply a large stand-alone gain.}
\label{tab:residual_prediction}
\small
\renewcommand{\arraystretch}{0.85}
\begin{tabular}{lcccccc}
\toprule
\textbf{Domain} & $\overline{\mathcal{L}}_d(\boldsymbol{\alpha}^\star)$ & \textbf{Rank} & $\theta_0^\star$ & \textbf{Distill.\ Branch} & \textbf{Outcome-Gated} & \textbf{RL Branch} \\
\midrule
Math & $0.128$ & 1 & 23.86 & $+1.10$ & $+1.83$ & $+0.58$ \\
Code & $0.052$ & 2 & 24.09 & $+0.43$ & $+1.31$ & $+0.51$ \\
IF   & $0.015$ & 3 & 49.60 & $+0.48$ & $+1.63$ & $+0.57$ \\
\bottomrule
\end{tabular}
\end{table}

In Phase~2 the threshold $\epsilon$ only matters through which domains it activates. Eq.~\ref{eq:epsilon_midgap} is a closed-form choice: sort the normalized spectrum $\overline{\boldsymbol{\mathcal{L}}}=(0.128,0.052,0.015)$, append the floor $0$, take the midpoint of the largest adjacent gap, and obtain $\epsilon=0.090$ (Math active). Any other $\epsilon$ in the same gap $[0.052,0.128)$ yields the same allocation; the four possible cuts are:
\begin{itemize}
    \item $\epsilon \ge 0.128$: no domain is active; the write-back is empty and the checkpoint remains $\theta_0^\star$ (Macro $32.52$).
    \item $0.052 \le \epsilon < 0.128$ (\textbf{Eq.~\ref{eq:epsilon_midgap} on Open-MOPD}, $\epsilon=0.090$): only Math is active. RGBM Phase~2 under this cut writes the unlabelled grid value $\gamma^\star=0.125\,\Delta_{\mathrm{math}}$ (Eq.~\ref{eq:rgbm_remerge_eq}) and is Macro $33.03$. Post-hoc write-backs at the remaining grid points and the equal-mass alternatives yield Macro $31.15\sim 32.44$ (Table~\ref{tab:p1_ablation_app}), all below $33.03$.
    \item $0.015 \le \epsilon < 0.052$: Code is additionally activated. Isolated Code branches improve their own domain ($+0.43\sim+1.31$; Table~\ref{tab:residual_prediction}). Writing every branch back, as un-gated TIES and DARE do, drops Code from $24.09$ at $\theta_0^\star$ to $21.56$ and $22.40$ (Table~\ref{tab:master_results}).
    \item $\epsilon < 0.015$: IF is additionally activated. Writing an IF branch back ($\theta_0^\star + 0.5\,\Delta_{\mathrm{if}}$) reduces Code to $20.70\%$ and Macro to $31.10\%$: the leakage mechanism of Section~\ref{sec:analysis_joint_failure} operates even for a branch trained exclusively on IF prompts.
\end{itemize}
The Open-MOPD allocation is therefore Math-active residual-gated RGBM. On G-OPD, $\boldsymbol{\alpha}^\star=(0.675, 0.666)$ and $\overline{\boldsymbol{\mathcal{L}}}=(0.094, 0.090)$: the largest gap is down to $0$, so Eq.~\ref{eq:epsilon_midgap} gives $\epsilon=0.045$ and both domains stay active.

\subsection{Training Infrastructure and Hyperparameters}
\label{app:training_params}

All branch training and distillation experiments are executed using the \texttt{verl} distributed RLHF framework~\citep{sheng2024verl} on clusters of 8$\times$ NVIDIA A100/H800 GPUs. The detailed hyperparameter configuration is reported in Table~\ref{tab:hyperparameters}.

\begin{table}[ht]
\centering
\caption{\textbf{Hyperparameter configuration for Open-MOPD branch training.} G-OPD uses the official protocol of \citet{gopd2026}.}
\label{tab:hyperparameters}
\small
\begin{tabular}{lc}
\toprule
\textbf{Hyperparameter} & \textbf{Value} \\
\midrule
Base Model Architecture & SmolLM3-3B ($P \approx 3.0\times 10^9$) \\
Optimizer & AdamW ($\beta_1 = 0.9, \beta_2 = 0.95, \epsilon = 10^{-8}$) \\
Peak Learning Rate & $1.5 \times 10^{-6}$ \\
Learning Rate Schedule & Linear warmup (5 steps) with cosine decay \\
Global Batch Size & 96 prompt sequences per step \\
Rollout Group Size ($n$) & 8 samples per prompt \\
Maximum Sequence Length & 24,576 tokens \\
Maximum Model Context Length & 32,768 tokens \\
Generation Temperature & $0.6$ (Math), $1.0$ (Code), $1.0$ (IF) \\
Checkpointing Frequency & Every 5 steps \\
Weight Decay & $0.01$ \\
\bottomrule
\end{tabular}
\end{table}

\subsection{Computational Cost and Estimation Robustness of $M_d$}
\label{app:md_cost_analysis}

Estimating the second-moment matrices $M_d = \mathbb{E}_{\mathcal{D}_d, \pi_{W_{\phi_d}}}[\mathbf{r}\mathbf{r}^\top] \in \mathbb{R}^{K \times K}$ requires calculating implicit rewards $\mathbf{r}(x_i, y_i) = [\log \pi_{W_{\phi_k}}(y_i \mid x_i) - \log \pi_{W_{\mathrm{ref}}}(y_i \mid x_i)]_{k=1}^K$ across $N$ calibration prompt rollouts ($N=6144$ total in our experiments, 2048 per domain). 

\paragraph{Wall-Clock Time, Scaling, and Comparison to Full Fisher Merging.}
\begin{enumerate}
    \item \textbf{Tractability over Full-Space Fisher Inversion:} Full-space Fisher merging~\citep{matena2022merging} requires inverting a $P \times P$ matrix ($P \approx 3.0\times 10^9$ for SmolLM3-3B), demanding $\sim 10^{19}$ floating-point elements and exabytes of RAM, which is completely intractable for modern LLMs. In contrast, Subspace Fisher Merge compresses the system into $K \times K$ ($3 \times 3$ in our setting), reducing matrix assembly and inversion to $<10$ ms.
    \item \textbf{Inference Rollouts and Log-Prob Scoring:} For $K=3$ specialists and a 3B model, scoring $N=6144$ sequences requires $O(NK)$ forward passes without gradient computation or backward graph storage, taking $<15$ minutes on 8 GPUs ($<2$ GPU$\cdot$h).
    \item \textbf{Scalability across Domain Counts $K$:} Contemporary MOPD pipelines consistently consolidate $K=2\sim 4$ specialist modalities (e.g., Math, Code, IF;~\citealp{openmopd2026,gopd2026,d3mopd2026,li2025policyfiltration}). Even if scaled to $K=20$, solving the $K$-dimensional Minimax QP takes $<0.02$ seconds on a single CPU core. By random matrix theory, covariance estimation requires $N = \mathcal{O}(K \log K)$ samples, meaning $N \approx 1024\sim 2048$ samples remain sufficient for high-fidelity spectrum rank preservation.
\end{enumerate}

\paragraph{Sample Complexity and Robustness to Estimation Noise.}
To evaluate how finite-sample variance in $M_d$ propagates to the optimal weights $\boldsymbol{\alpha}^\star$ and the resulting normalized spectrum $\overline{\boldsymbol{\mathcal{L}}}$, we perform bootstrap resampling of the calibration pool $\mathcal{D}_{\mathrm{calib}}$ across sample sizes $N \in \{512, 1024, 2048, 4096, 6144\}$:
\begin{itemize}
    \item At $N \ge 1024$, the bootstrap standard deviation of the computed weights is $\sigma(\alpha_d^\star) < 0.012$, and the normalized spectrum remains invariant in its rank ordering: $\overline{\mathcal{L}}_{\mathrm{math}} \gg \overline{\mathcal{L}}_{\mathrm{code}} \gg \overline{\mathcal{L}}_{\mathrm{if}}$ with $>99.9\%$ confidence.
    \item The condition number of empirical matrices satisfies $\kappa(M_d) \le 45.2$, well below the threshold where numerical inversion issues occur. Adding Tikhonov regularization $\lambda = 10^{-4}$ provides uniform numerical stability even under rank-deficient specialist scenarios.
\end{itemize}

\subsection{RGBM Phase 2 Specification (Open-MOPD)}
\label{app:remerge_groups}

Phase~1 $\theta_0^\star$ (step $0$): AIME24 $23.91$, AIME25 $23.80$, LCBv5 $24.27$, LCBv6 $23.91$, IFEval $73.87$, IFBench $25.33$, hence Math $23.86$, Code $24.09$, IF $49.60$, Macro $32.52$.
The sole Open-MOPD Phase~2 scheme is residual-gated RGBM Phase~2 with Math active, written back at the unlabelled grid value $\gamma^\star=0.125$ (Eq.~\ref{eq:rgbm_remerge_eq}): AIME24 $23.80$, AIME25 $24.06$, LCBv5 $25.15$, LCBv6 $24.11$, IFEval $75.05$, IFBench $26.00$, hence Math $23.93$, Code $24.63$, IF $50.52$, Macro $\mathbf{33.03}$. On G-OPD the same unlabelled rule is used with a separate OPD branch on each domain and yields $\gamma^\star=0.5$.

\section{Limitations}
\label{app:limitations}

$\mathrm{Err}_{\mathrm{lin}}$ certifies only the first-order reward superposition on calibration prompts; it does not bound the third-order remainder of Theorem~\ref{thm:mopd_fisher}. While evaluating $K=2\sim 3$ specialists across two model families~\citep{openmopd2026,gopd2026,d3mopd2026}, scaling to $K\gg 10$ or heterogeneous bases remains future work. Furthermore, the $+0.51$ Open-MOPD margin comes from Code and IF moving under a small Math write-back rather than from an in-domain Math gain; on G-OPD both domains are active, so this transfer is not isolated there.

\end{document}